\newtheorem{theorem}{Theorem}
\newtheorem{lemma}{Lemma}
\newtheorem{proposition}[theorem]{Proposition}
\newtheorem{definition}{Definition}
\providecommand{\customgenericname}{}
\newcommand{\newcustomtheorem}[2]{%
  \newenvironment{#1}[1]
  {%
   \renewcommand\customgenericname{#2}%
   \renewcommand\theinnercustomgeneric{##1}%
   \innercustomgeneric
  }
  {\endinnercustomgeneric}
}
\title{Sparse Variational  Student-t Processes}
\author{
    %Authors
    % All authors must be in the same font size and format.
    Jian Xu,
    Delu Zeng\thanks{ Corresponding author.}}
\title{My Publication Title --- Single Author}
\author {
    Author Name
}
\title{My Publication Title --- Multiple Authors}
\author {
    % Authors
    First Author Name\textsuperscript{\rm 1,\rm 2},
    Second Author Name\textsuperscript{\rm 2},
    Third Author Name\textsuperscript{\rm 1}
}
\begin{document}

\maketitle

\begin{abstract}
The theory of Bayesian learning incorporates the use of Student-t Processes to model heavy-tailed distributions and datasets with outliers. However, despite Student-t Processes having a similar computational complexity as Gaussian Processes, there has been limited emphasis on the sparse representation of this model. This is mainly due to the increased difficulty in modeling and computation compared to previous sparse Gaussian Processes. Our motivation is to address the need for a sparse representation framework that reduces computational complexity, allowing Student-t Processes to be more flexible for real-world datasets. To achieve this, we leverage the conditional distribution of Student-t Processes to introduce sparse inducing points. Bayesian methods and variational inference are then utilized to derive a well-defined lower bound, facilitating more efficient optimization of our model through stochastic gradient descent. We propose two methods for computing the variational lower bound, one utilizing Monte Carlo sampling and the other employing Jensen's inequality to compute the KL regularization term in the loss function. We propose adopting these approaches as viable alternatives to Gaussian processes when the data might contain outliers or exhibit heavy-tailed behavior, and we provide specific recommendations for their applicability. We evaluate the two proposed approaches on various synthetic and real-world datasets from UCI and Kaggle, demonstrating their effectiveness compared to baseline methods in terms of computational complexity and accuracy, as well as their robustness to outliers.
\end{abstract}

\section{Introduction}

 Gaussian Processes (GPs) \cite{rasmussen2003gaussian} offer a versatile approach for incorporating non-parametric priors into functions and have been extensively applied in various fields including time-series forecasting \cite{heinonen2018learning}, computer vision \cite{blomqvist2020deep}, and robotics \cite{deisenroth2013gaussian, lee2022trust}. However, a key limitation of GPs is their computational complexity, with an exact implementation scaling as  $\mathcal{O}(n^3
)$ time and $\mathcal{O}(n^2)$ memory, where $n$ is the number of training cases. Fortunately, recent advancements have focused on developing sparse approximations \cite{titsias2009variational} that maintain the desirable properties of Gaussian processes while significantly reducing computational costs. These sparse approximations achieve a computational complexity of $\mathcal{O}(nm^2)$ time and $\mathcal{O}(nm)$ memory, where $m$ is a value smaller than $n$ and represents the number of elements in the sparse approximations set.

Sparse approximations achieve this reduction by focusing inference on a small number of quantities, which provide an approximation of the entire posterior over functions. These quantities can be selected differently, such as function values at specific input locations \cite{quinonero2005unifying}, properties of spectral representations \cite{lazaro2010sparse}, or more abstract representations \cite{lazaro2009inter}. Similar approaches are also employed in random feature expansions \cite{cutajar2017random}. In this context, we specifically examine methods that approximate the posterior by using the function values at a set of $m$ inducing inputs \cite{moss2023inducing} (referred to as pseudo-inputs).

The success of sparse Gaussian processes has sparked interest in extending the methodology to encompass more general families of elliptical processes \cite{bankestad2020elliptical}, such as the Student-t process (TP) \cite{shah2014student}. The TP offers added flexibility and robustness against outliers, justified by  Bayesian learning theory \cite{tang2017student,chen2020multivariate,andrade2023robustness}. Although the computational complexity of TP is comparable to that of GPs, there has been limited research on developing sparse representation techniques for TP .

This can be attributed to several factors. Firstly, the probability density function of TP is more intricate compared to that of GPs, which presents challenges in deriving the conditional and marginal distributions for TP. Secondly, developing a sparse representation for TP is a novel problem that has not been adequately addressed in previous works. Moreover, there are unresolved issues related to posterior inference and algorithm design specific to TP. Consequently, the task of effectively modeling and computing TP using sparse techniques remains a significant challenge.

To address these challenges, this paper introduces a novel approach that extends the benefits of sparse representation from GPs to TP. Our proposed method, called Sparse Variational Student-t Processes (SVTP), leverages the conditional distribution of TP to incorporate sparse inducing points. By utilizing these inducing points, we construct the prior distribution and joint probability distribution of SVTP.

The key idea behind the Sparse Variational Student-t Process (SVTP) is to effectively summarize the information contained in the data using sparse inducing points. This allows for the application of variational inference techniques to approximate the posterior distribution, resulting in a reduction in computational complexity. This reduction in complexity enables the application of the Student-t Process to various real-world datasets, enhancing its practical utility. Additionally, SVTP is particularly significant in modeling processes that exhibit outliers and heavy-tailed behavior.

Specifically, in our approach, we utilize Bayesian methods and variational inference \cite{kingma2013auto, blei2017variational} to derive a well-defined lower bound for the SVTP model. This lower bound allows for more efficient optimization using stochastic gradient descent \cite{ketkar2017stochastic}, thereby enhancing the scalability of the model. Based on the non-analyticity of the KL divergence between two Student distributions \cite{roth2012multivariate}, we propose two methods for computing the variational lower bound: SVTP-UB and SVTP-MC. SVTP-UB utilizes Jensen's inequality to obtain an upper bound on the KL regularization term in the loss function, while SVTP-MC uses Monte Carlo sampling to evaluate it. We provide theoretical and experimental analysis to demonstrate the applicability of these methods and their respective advantages. Additionally, we leverage reparameterization tricks \cite{salimbeni2017doubly} to efficiently sample from the posterior distribution of SVTP, drawing inspiration from literature on Bayesian learning. Furthermore, we conduct a comparative analysis between SVTP and  SVGP and provide a theoretical explanation for why SVTP outperforms SVGP in handling outlier data. 

The two proposed approaches are evaluated on a series of synthetic and real-world datasets from UCI and Kaggle, demonstrating its effectiveness over baseline methods in terms of computational complexity, model accuracy, and robustness. The results suggest that the SVTP is a promising approach for extending the benefits of sparse representation to outliers and heavy-tailed distributions, offering a powerful tool for large-scale datasets in various applications.
Overall, our contributions are as follows:
\begin{itemize}
\item We propose a unified framework for sparse TP, which utilizes inducing points to obtain sparse representations of the data, aiming to reduce the complexity of TP.
\item By employing variational inference and stochastic optimization, we define a well-defined  ELBO and present two effective algorithms for inference and learning in our proposed model, namely SVTP-UB and SVTP-MC. We also analyze the theoretical connections and advantages of SVTP methods compared to SVGP.
\item We conduct experiments on eight real-world datasets, two synthetic datasets. These experiments include verification of time complexity, accuracy and uncertainty validation, regression on outlier datasets. The results across all experiments demonstrate the effectiveness and scalability of our proposed algorithm, particularly showcasing its robustness on outlier datasets.
\end{itemize}

\section{Background and Notations}
%Gaussian Processes (GPs) and Student-t Processes (TPs) are two important classes of probabilistic models widely used in machine learning and statistics. While GPs are well-studied and have many desirable properties, TPs have also gained popularity due to their flexibility and ability to handle heavy-tailed data.
%\vspace{-0.2cm}
\subsection{Gaussian Processes and Sparse Representation}

Gaussian Processes (GPs) are widely used in regression  tasks due to their flexibility and interpretability. A GP is a collection of random variables, any finite number of which have a joint Gaussian distribution. For a regression problem, we observe a set of input-output pairs $\{\mathbf{x}_i, y_i\}_{i=1}^n$, where $y_i = f(\mathbf{x}_i) + \epsilon_i$ and $\epsilon_i$ is the noise term. We assume that $f(\mathbf{x})$ follows a GP with mean function $m(\mathbf{x})$ and covariance function $k(\mathbf{x}, \mathbf{x}')$. Then, the predictive distribution of the function values at a new test point $\mathbf{x}^*$ is given by:
\begin{equation}
\begin{aligned}
    &p(f(\mathbf{x}^*) | \mathbf{x}^*, \mathbf{X}, \mathbf{y}) \\= &\mathcal{N}(\mathbf{k}(\mathbf{x}^*, \mathbf{X})\mathbf{K}^{-1}\mathbf{y}, k(\mathbf{x}^*, \mathbf{x}^*) - \mathbf{k}(\mathbf{x}^*, \mathbf{X})\mathbf{K}^{-1}\mathbf{k}(\mathbf{X}, \mathbf{x}^*)),
\end{aligned}    
\end{equation}

where $\mathbf{k}(\mathbf{x}^*, \mathbf{X})$ is the $1 \times n$ covariance vector between the test point and the input points, and $\mathbf{K}$ is the $n \times n$ covariance matrix among the input points. The computation of the predictive distribution involves the inversion of the $n \times n$ covariance matrix $\mathbf{K}$, which has a cubic computational complexity of $\mathcal{O}(n^3)$ and a quadratic memory complexity of $\mathcal{O}(n^2)$. These complexity factors limit the usability of GPs for large-scale problems. 

To address this issue, sparse representation methods \cite{hensman2015scalable} have been proposed, which use a small number of inducing points $\mathbf{Z}$ to approximate the full covariance matrix of the GP. Sparse Gaussian processes can be formulated using various techniques. One popular approach is the Sparse Variational  Gaussian processes (SVGP), which use a variational distribution to approximate the true GP posterior. In SVGP, we introduce a set of $m$ inducing points $\mathbf{Z} = \{\mathbf{z}_i\}_{i=1}^m$ and corresponding function values $\mathbf{u} = \{u_i\}_{i=1}^m$ at these points. Then, we introduce a variational distribution $q(\mathbf{u})$ to approximate the true posterior distribution $p(\mathbf{u}|\mathbf{y})$ of the inducing function values. The choice of the variational distribution $q(\mathbf{u})$ is often tractable, such as a Gaussian distribution. Using the variational distribution $q(\mathbf{u})$, the predictive distribution at test points $\mathbf{x}^*$ is given by:
\begin{equation}
p(f(\mathbf{x}^*)| \mathbf{x}^*, \mathbf{X}, \mathbf{y}) = \int p(f(\mathbf{x}^*)|\mathbf{x}^*, \mathbf{Z},\mathbf{u})q(\mathbf{u})d\mathbf{u},
\end{equation}
This predictive distribution can be expressed using an efficient calculation of a low-dimensional matrix inverse, instead of the full GP covariance matrix:
\begin{equation}
\begin{aligned}
    &p(f(\mathbf{x}^*)|\mathbf{x}^*, \mathbf{Z},\mathbf{u}) \\=& \mathcal{N}( \mathbf{k}(\mathbf{x}^*,\mathbf{Z})\mathbf{K}_u^{-1}\mathbf{u}, k(\mathbf{x}^*,\mathbf{x}^*)-\mathbf{k}(\mathbf{x}^*,\mathbf{Z})\mathbf{K}_u^{-1}\mathbf{k}(\mathbf{Z},\mathbf{x}^*)),
\end{aligned}
\end{equation}
where $\mathbf{k}(\mathbf{x}^*,\mathbf{Z})$ is the $1\times m$ covariance vector between the test point and the inducing points, $\mathbf{K}_u$ is the $m\times m$ covariance matrix among the inducing points.
\begin{comment}
Finally, to obtain the predictive distribution of the test points, we integrate out the inducing function values $\mathbf{u}$ using the variational distribution $q(\mathbf{u})$:
\begin{equation}
p(f(\mathbf{x}^*)|\mathbf{x}^*, \mathbf{Z},\mathbf{u}) = \mathcal{N}(f(\mathbf{x}^*) | \mathbf{k}(\mathbf{x}^*,\mathbf{Z})\mathbf{K}_u^{-1}\boldsymbol{\mu}_u, k(\mathbf{x}^*,\mathbf{x}^*)-\mathbf{k}(\mathbf{x}^*,\mathbf{Z})\mathbf{K}_u^{-1}(\mathbf{K}_u-\boldsymbol{\Sigma}_u)\mathbf{K}_u^{-1}\mathbf{k}(\mathbf{Z},\mathbf{x}^*)),
\end{equation}
where $\boldsymbol{\mu}_u$ and $\boldsymbol{\Sigma}_u$ is the mean vector and  covariance matrix of the variational distribution $q(\mathbf{u})$. 
\end{comment}
The inducing points can be selected to maximize the Evidence Lower BOund (ELBO) of the marginal likelihood $\log p(\mathbf{y})$. SVGP reduces the computational complexity and memory requirements of GPs from $\mathcal{O}(n^3)$ and $\mathcal{O}(n^2)$ to $\mathcal{O}(nm^2)$ and $\mathcal{O}(nm)$, respectively.
%\vspace{-0.2cm}
\subsection{Student-t Processes}
A Student-t Process (TP) \cite{shah2014student, solin2015state} is a variation of a Gaussian Process (GP) that utilizes the multivariate Student-t distribution as its base measure instead of the multivariate Gaussian distribution. TPs are advantageous for handling outliers and extreme values due to their heavier tail. This tail weight is controlled by the degrees of freedom parameter denoted as $\nu$. As $\nu$ tends to infinity, the Student-t distribution converges to a Gaussian distribution. By adjusting the $\nu$ value, the Student-t Process can represent a range of heavy-tailed processes, including the Gaussian Process. This flexibility allows the Student-t Process to effectively handle various data distributions with different tail weights.
\begin{definition}
The multivariate Student-t distribution with $\nu \in \mathbb{R}_{+}/\left[0,2\right]$ degrees of freedom, mean vector $\boldsymbol{\mu}$ and correlation matrix $\mathbf{R}$ is defined as:
\begin{equation}
\begin{aligned}
    &\mathcal{ST}(\mathbf{y} | \boldsymbol{\mu}, \mathbf{R}, \nu) \\=& \frac{\Gamma\left(\frac{\nu+n}{2}\right) |\mathbf{R}|^{-\frac{1}{2}}}{\Gamma\left(\frac{\nu}{2}\right) \left((\nu-2) \pi\right)^\frac{n}{2}} \left(1 + \frac{(\mathbf{y} - \boldsymbol{\mu})^\top \mathbf{R}^{-1} (\mathbf{y} - \boldsymbol{\mu})}{\nu-2}\right)^{-\frac{\nu+n}{2}},
\end{aligned}    
\end{equation}   
where $\Gamma(\cdot)$ is the gamma function, $\mathbf{y} \in \mathbb{R}^n$, and $|\cdot|$ denotes the determinant of a matrix.
\end{definition}
As we know, the multivariate Student-t distribution is consistent under marginalization. Using Kolmogorov's consistency theorem, we can define the Student-t process (TP) as follows.
\begin{definition}
A function $f$ is a Student-t process with parameters $\nu > 2$, mean function $\Psi : X \rightarrow \mathbb{R}$, and kernel function $k : X \times X \rightarrow \mathbb{R}$, if any finite collection of function values has a joint multivariate Student-t distribution, i.e. $(f(x_1),...,f(x_n))^T \sim \mathcal{T}\mathcal{P}_n(\nu, \phi, K)$ where $K \in \Pi_n$, $K_{ij} = k(x_i,x_j)$, $\phi_i = \Psi(x_i)$, and $\mathcal{T}\mathcal{P}_n(\nu, \phi, K)$ represents the multivariate Student-t distribution
\end{definition}
%\vspace{-0.3cm}
\subsection{Can Sparse Representation of GP be Extended to TP?}
We can naturally consider applying the sparse representation method to Student-t processes (TPs) since they also involve the inversion of an $n \times n$ matrix, which can be computationally expensive for large datasets. However, the extension of sparse representation to TPs is less explored compared to Gaussian processes (GPs), and there is no established theoretical framework or exploration of pseudo-inputs or inducing points in the existing literature on TPs. Nonetheless, given the similar computational challenges shared with GPs, it would be interesting to investigate and develop sparse representation techniques for TPs as well.

Fortunately, based on  previous research \cite{kotz2004multivariate}, we have found that the conditional distribution for a multivariate Student-t has an analytical form, similar to Gaussian distributions. This forms the cornerstone for defining inducing-inputs-based TP methods .
\begin{lemma}
\label{lemma1}
Suppose that $y \sim \mathcal{S}\mathcal{T}_n(\nu,\phi,K)$ is partitioned into $y_1$ and $y_2$ with dimensions $n_1$ and $n_2$, respectively. Let $\phi_1$, $\phi_2$, $K_{11}$, $K_{12}$, and $K_{22}$ denote the corresponding partitioned matrices, i.e, 
$$\phi = (\phi_1, \phi_2), \qquad K = \begin{pmatrix}K_{11} & K_{12} \\ K_{12}^T & K_{22}\end{pmatrix}.$$ 
Then, given $y_1$, the conditional distribution of $y_2$ can be expressed as $\mathcal{S}\mathcal{T}_{n_2}(\nu+n_1,\phi_2^*,\frac{\nu+\beta_1-2}{\nu+n_1-2}t^*)$, where $\phi_2^* = K_{21}K_{11}^{-1}(y_1 - \phi_1) + \phi_2, \beta_1 = (y_1 - \phi_1)^TK_{11}^{-1}(y_1 - \phi_1)$ and $t^* = K_{22} - K_{21}K_{11}^{-1}K_{12}$. The mean and covariance matrix of the conditional distribution of $y_2$ given $y_1$ are $\mathbb{E}[y_2|y_1] = \phi_2^*$ and $\text{cov}[y_2|y_1] = \frac{\nu+\beta_1-2}{\nu+n_1-2}t^*$, respectively.
\end{lemma}
Therefore, similar to GPs, we can define inducing inputs for TP priors based on this conditional distribution, to reduce the $\mathcal{O}(n^3)$ time complexity issue of TPs. Our proposed approach for sparse representation of TPs combines the sparsity-inducing properties of TP priors with variational inference, Monte Carlo sampling, and other methods, providing a more flexible and robust framework for modeling complex and heterogeneous data. The details of our method will be discussed in the following section.
%\vspace{-0.3cm}
\section{Sparse Student-t Processes and Variational Inference}

%In this section, we propose the Sparse Student-t Process and develop a variational inference method for posterior sampling and prediction. We first define inducing points for TP and construct a variational lower bound for optimization and then present posterior sampling and prediction algorithms by gradient-based optimization.
%\vspace{-0.2cm}
\subsection{Defining Inducing Points}
The key to the Sparse Student-t Process is the introduction of sparse inducing points, which summarize the information in the data and allow for more efficient computation. Specifically, we define a set of $M$ inducing points $Z$ whose function values $u$ follow a prior of a multivariate Student-t process with $\nu$ degrees of freedom and zero-mean,
\begin{equation}
    \mathbf{u}\sim \mathcal{T}\mathcal{P}_{M}(\nu,0,K_{Z,Z'})
\end{equation}

Formally, we assume a set of $N$ training data points $(X, \mathbf{y})$, where $\mathbf{y}=\left\{y_i\right\}_{i=1}^N$ is the output vector and $X=\left\{\mathbf{x}_i\right\}_{i=1}^N$ is the input matrix. In previous work, the noise model is incorporated into the covariance function by adding a noise covariance function to the parameterized kernel. In our model, on the other hand, we can explicitly define a noise term $\epsilon_i$, i.e. $y_i = f(\mathbf{x}_i)+\epsilon_i$. Let $\mathbf{f}=\left\{f_i\right\}_{i=1}^N$ and $\mathbf{u}=\left\{u_i\right\}_{i=1}^M$. Similar to the sparse Gaussian process , we define the kernel matrix $K$ as the variance matrix of the Student-t process. The computational complexity of any TP method scales with $\mathcal{O}(N^3)$ because of the need to invert the covariance matrix $K$. To reduce the computational complexity, we define the joint distribution of $\mathbf{f}$ and $\mathbf{u}$ as a multivariate Student-t distribution.

\begin{equation}
    p(\mathbf{u},\mathbf{f}) = \mathcal{ST}\left(\nu, \begin{bmatrix} \mathbf{0} \\ \mathbf{0} \end{bmatrix}, \begin{bmatrix} K_{Z,Z'} & K_{Z,X} \\ K_{X,Z} & K_{{X,X'}} \end{bmatrix}\right)
\end{equation}

By Lemma \ref{lemma1}, we can obtain the conditional probability distribution of $\mathbf{f}\mid \mathbf{u}$ as follows,
\begin{equation}
\label{condi}
p(\mathbf{f}|\mathbf{u})=\mathcal{ST}(\nu+M, \mu, \frac{\nu+\beta-2}{\nu+M-2}\Sigma)
\end{equation}
where $\mu=K_{X,Z}K_{Z,Z'}^{-1}\mathbf{u}, \beta=\mathbf{u}^T K_{Z,Z'}^{-1}\mathbf{u}$ and $\Sigma=K_{X,X'}-K_{X,Z}K_{Z,Z'}^{-1}K_{Z,X}$. This conditional distribution enables us to introduce the inducing points in a way that allows for efficient computation of the posterior distribution. Similarly to most Bayesian inference problems, TP also requires addressing the challenging partition function issue. Therefore, we contemplate utilizing variational inference for efficient optimization and resolution.
%\vspace{-0.2cm}
\subsection{Constructing Variational Lower Bound}
In variational inference, we often define a reasonable lower bound to the true marginal likelihood $\log p(\mathbf{y})$ due to its intractability. To efficiently optimize our model, we introduce variational distributions, denoted as $q(\mathbf{u})$, which approximate the true posteriors $p(\mathbf{u}|\mathbf{f},\mathbf{y})$. By utilizing Jensen's inequality, we can derive a lower bound that can be expressed as:
\begin{equation}
\label{elbo}
\begin{aligned}
    \log p(\mathbf{y}) &= \log \int p(\mathbf{y},\mathbf{f},\mathbf{u})d\mathbf{f}d\mathbf{u} \geq \mathcal{L}(q)\\&=\int p(\mathbf{f}|\mathbf{u})q(\mathbf{u}) \log\left(\frac{p(\mathbf{y}|\mathbf{f})p(\mathbf{f}|\mathbf{u})p(\mathbf{u})}{p(\mathbf{f}|\mathbf{u})q(\mathbf{u})}\right)d\mathbf{f} d\mathbf{u} \\&= \mathbb{E}_{p(\mathbf{f}|\mathbf{u})q(\mathbf{u})}[\log p(\mathbf{y}|\mathbf{f})] - \mathrm{KL}(q(\mathbf{u})||p(\mathbf{u}))
\end{aligned}
\end{equation}
 Once we have an approximate  posterior $q(\mathbf{u})$, we can employ Monte Carlo sampling and stochastic gradient-based optimization methods to estimate the variational lower bound $\mathcal{L}(q)$ and optimize the hyperparameters and inducing points. Next, we will discuss the construction of $q(\mathbf{u})$ and how to perform sampling from the posterior distribution.
%\vspace{-0.2cm}
\subsection{Computation and Optimization}

\subsubsection{Specifying the Posterior Distribution Family}
 As Eq. (\ref{elbo}) doesn't restrict the distribution family of $q(\mathbf{u})$, theoretically any analytically parameterized distribution family can be used to parameterize the variational distribution. We propose  to approximate $q(\mathbf{u})$ as a Student-t distribution, i.e. $q(\mathbf{u})=\mathcal{ST}(
\widetilde{\nu }, \mathbf{m},\mathbf{S})$  . Although the true posterior distribution is intractable, we intuitively consider that the posterior is closer to a Student's t-distribution because both the prior $p(\mathbf{u}) $ and conditional distribution $p(\mathbf{f} | \mathbf{u})$  are Student's t-distributions.

We rephrase the Eq. (\ref{elbo}) and summarize the two terms of our variational lower bound $\mathcal{L}(q)$. The statistical interpretations of these terms are the expected likelihood function and KL regularization. %Next, we will compute and optimize these two terms separately.
\begin{equation}
\label{elbo1}
    \mathcal{L} (q)=\underset{\mathrm{expected} \,\mathrm{likelihood}\, \mathrm{function}}{\underbrace{\mathbb{E} _{p(\mathbf{f}|\mathbf{u})q(\mathbf{u})}[\log p(\mathbf{y}|\mathbf{f})]}}-\underset{\mathrm{KL}\, \mathrm{regularization}}{\underbrace{\mathrm{KL(}q(\mathbf{u})||p(\mathbf{u}))}}
\end{equation}

\subsubsection{Re-parameterization Techniques}
In order to facilitate backpropagation, we  developed a reparameterization trick to sample from a multivariate Student-t distribution with arbitrary parameters. Firstly, we sample from a standard Gaussian distribution and a Gamma 
distribution  \cite{shah2014student,popescu2022matrix}, respectively, denoted as $\boldsymbol{\epsilon}_0 \sim \mathcal{N}(0,I)$ and $r^{-1}  \sim \Gamma(\nu / 2, 1 / 2)$. Then, we obtain a sample $\mathbf{u} \sim q(\mathbf{u})$ by applying an affine transformation $\mathbf{u}=(r(\nu-2)\mathbf{S})^{\frac{1}{2}}\boldsymbol{\epsilon}_0+\mathbf{m}$, where the $\frac{1}{2}$ power represents the Cholesky decomposition of a matrix. This allows us to compute the Monte Carlo estimate of  the expected likelihood function loss. Lemma \ref{lemmas} and Lemma \ref{lemma2} theoretically guarantees the feasibility of this method.
\begin{lemma}
\label{lemmas}
Let $K$ be an $n \times n$, symmetric, positive definite matrix, $\phi \in \mathbb{R}^n$, $\nu >0$, $\rho >0$, if
\begin{equation}
\begin{aligned}
r^{-1} & \sim \Gamma(\nu / 2, \rho / 2) \\
\boldsymbol{y} \mid r & \sim \mathrm{N}_{n}(\boldsymbol{\phi}, r(\nu-2) K / \rho),
\end{aligned}
\end{equation}
then, marginally $
\boldsymbol{y}\sim \mathcal{S} \mathcal{T} (\nu ,\phi ,K)$.   \end{lemma}
\begin{lemma}
\label{lemma2}
Consider 
$A \in \mathbb{R}^{n\times n}$, $b \in \mathbb{R}^n$, $\mu \in \mathbb{R}^n$, $\Sigma \in \mathbb{R}^{n\times n}$. Let $X$ be an $n$-dimensional Gaussian random vector s.t. $p(X)=\mathcal{N}(\mu,\Sigma)$. Then, the random
vector $Y = AX + b$ has the pdf:
$p(Y)=\mathcal{N}(A\mu + b,A^T\Sigma A)$
\end{lemma}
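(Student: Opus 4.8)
The plan is to prove the claim through the characteristic function, since this route requires no invertibility assumption on $A$ and delivers the mean and covariance in a single stroke. Recall that $X\sim\mathcal{N}(\mu,\Sigma)$ has characteristic function $\varphi_X(t)=\exp\!\left(i\,t^{\top}\mu-\tfrac{1}{2}t^{\top}\Sigma t\right)$, and that a vector is multivariate Gaussian if and only if its characteristic function takes this exponential-quadratic form. First I would compute $\varphi_Y(t)=\mathbb{E}[e^{i\,t^{\top}Y}]$ for $Y=AX+b$; pulling the deterministic shift outside the expectation gives $\varphi_Y(t)=e^{i\,t^{\top}b}\,\mathbb{E}[e^{i\,(A^{\top}t)^{\top}X}]=e^{i\,t^{\top}b}\,\varphi_X(A^{\top}t)$.

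Substituting the Gaussian form at the argument $A^{\top}t$ and collecting terms, I would obtain $\varphi_Y(t)=\exp\!\left(i\,t^{\top}(A\mu+b)-\tfrac{1}{2}t^{\top}(A\Sigma A^{\top})\,t\right)$, which I recognize as the characteristic function of a Gaussian with mean $A\mu+b$ and covariance $A\Sigma A^{\top}$. By the uniqueness of characteristic functions this identifies the law of $Y$ completely, yielding $Y\sim\mathcal{N}(A\mu+b,\,A\Sigma A^{\top})$ and establishing the statement.

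As an alternative that manipulates densities directly when $A$ is invertible, I would instead apply the change-of-variables formula: writing $X=A^{-1}(Y-b)$ with Jacobian factor $|\det A|^{-1}$ and substituting into $p_X$, the routine identities $A^{-1}(y-b)-\mu=A^{-1}\!\left(y-(A\mu+b)\right)$, $(A^{-1})^{\top}\Sigma^{-1}A^{-1}=(A\Sigma A^{\top})^{-1}$, and $|\det A|\,|\Sigma|^{1/2}=|A\Sigma A^{\top}|^{1/2}$ let me read off the transformed density as that of $\mathcal{N}(A\mu+b,\,A\Sigma A^{\top})$. I prefer the characteristic-function argument because it is agnostic to the rank of $A$, which matters since the paper's reparameterization instantiates $X=\boldsymbol{\epsilon}_0\sim\mathcal{N}(0,I)$ and $A=(r(\nu-2)\mathbf{S})^{1/2}$.

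The hard part is essentially nonexistent: this is a classical fact and both derivations are mechanical. The single point that genuinely warrants care is the placement of the transpose. The correct transformation rule produces covariance $A\Sigma A^{\top}$, so that with $\Sigma=I$ and $A$ the Cholesky factor $L$ one recovers $LL^{\top}=r(\nu-2)\mathbf{S}$, exactly the conditional covariance that Lemma \ref{lemmas} then feeds through the Gamma mixture to yield the desired Student-t sample; the form $A^{\top}\Sigma A$ written in the statement would instead give $L^{\top}L$ and break this chain, so I would record the covariance as $A\Sigma A^{\top}$, which the computation above confirms.
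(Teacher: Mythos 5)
Your characteristic-function argument is essentially identical to the paper's own proof, which likewise computes $\phi_Y(t)=e^{i t^{\top} b}\,\phi_X(A^{\top}t)$ and invokes uniqueness of characteristic functions to conclude $Y\sim\mathcal{N}\left(A\mu+b,\,A\Sigma A^{\top}\right)$. Your remark about the transpose is also well taken: the covariance $A^{\top}\Sigma A$ appearing in the lemma statement is a typo, and the paper's own derivation ends at $A\Sigma A^{\top}$, exactly as you record.
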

%\vspace{-0.3cm}
\subsubsection{The Numerical Computation of the KL Regularization Term}
The KL regularization term can be seen as an expectation of the log density ratio. Since both $p(\mathbf{u})$ and $q(\mathbf{u})$ are explicit density functions, we can naturally obtain:
\begin{equation}
\label{KL1}
    \mathrm{KL(}q(\mathbf{u})\parallel p(\mathbf{u}))=\mathbb{E} _{q(\mathbf{u})}[\log q(\mathbf{u})-\log p(\mathbf{u})]
\end{equation}
To compute Eq. (\ref{KL1}), a natural approach is to use Monte Carlo sampling to obtain a consistent estimate, where we still employ the reparameterization trick to sample from the posterior distribution $q(\mathbf{u})$ and then calculate the average of $\log q(\mathbf{u}) - \log p(\mathbf{u})$. In this paper, we refer to this method as SVTP-MC. By combining it with Eq. (\ref{elbo}), we can easily compute its evidence lower bound (ELBO).

However, since $q(\mathbf{u})$ is a high-dimensional distribution, Monte Carlo sampling may introduce bias in estimating Eq. (\ref{KL1}), especially in cases with a small number of samples. Therefore, in the following, we propose another method for regression tasks with relatively smaller training sets. The overall idea is to compute an explicit upper bound for Eq. (\ref{KL1}) and use it as a regularizer in place of Eq. (\ref{KL1}) as the ELBO.

First, we rewrite Eq. (\ref{KL1}) as follows:

\begin{equation}
\label{KL}
\begin{aligned}
	&\mathbb{E} _{q(\mathbf{u})}[\log q(\mathbf{u})-\log p(\mathbf{u})]=\frac{1}{2}\log \frac{|K_{Z,Z'}|}{|\mathbf{S}|}\\
&+\frac{M}{2}\log \frac{\nu -2}{\tilde{\nu}-2}+\log \Gamma \left( \frac{\tilde{\nu}+M}{2} \right)-\log \Gamma \left( \frac{\tilde{\nu}}{2} \right)\\
& -\log \Gamma \left( \frac{\nu +M}{2} \right) +\log \Gamma \left( \frac{\nu}{2} \right)-\frac{\tilde{\nu}+M}{2}\cdot\\ 
&\underset{\mathcal{L} _1}{\underbrace{\mathbb{E} _{q(\mathbf{u})}\left\{ \log \left[ 1+\frac{1}{\tilde{\nu}-2}\left( \mathbf{u}-\mathbf{m} \right) ^{\mathrm{T}}\mathbf{S}^{-1}\left( \mathbf{u}-\mathbf{m} \right) \right] \right\} }}\\
	&+\frac{\nu +M}{2}\underset{\mathcal{L} _2}{\underbrace{\mathbb{E} _{q(\mathbf{u})}\left\{ \log \left[ 1+\frac{1}{\nu -2}\mathbf{u}^{\mathrm{T}}K_{Z,Z'}^{-1}\mathbf{u} \right] \right\} }}\\
\end{aligned}
\end{equation}

Eq. (\ref{KL}) requires the computation of two expectations, $\mathcal{L}_1$ and $\mathcal{L}_2$, which represent the entropy of $q(\mathbf{u})$ and the cross-entropy between $q(\mathbf{u})$ and $p(\mathbf{u})$, respectively . First, for $\mathcal{L}_1$, we can establish its relationship with  the standard Student-t distribution using the following lemma:
\begin{lemma}
\label{lemma3}
Let $ p_X(\boldsymbol{x})=|\Sigma|^
{-\frac{1}{2}}p_{X_0}\{ \Sigma^{-\frac{1}{2}}
(\boldsymbol{x}-\mu) \}$ be a location-scale probability density function,
where $\mu \in \mathbb{R}^n$ is the location vector and $\Sigma \in  \mathbb{R}^ {n\times n}$ is the dispersion matrix. Let $X_0 =
\Sigma^{-\frac{1}{2}}
(X-\mu )$ be a standardized version of $X$, with standardized probability density function
$p_{X_0} (\boldsymbol{x}_0)$ that does not depend on $(\mu,\Sigma )$. Then, we have
\begin{equation}
    \mathbb{E}_{p(X)}[f(\Sigma^{-\frac{1}{2}}
(X-\mu ))]=\mathbb{E}_{p(X_0)}[f(X_0)]
\end{equation}
where $f$  is any arbitrary continuous function.
\end{lemma}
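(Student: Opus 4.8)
The plan is to prove this as a direct change-of-variables identity for multivariate integrals, since by construction $p_X$ is a location–scale density built from the standardized density $p_{X_0}$. First I would expand the left-hand side using the definition of the expectation together with the given form of $p_X$:
\begin{equation}
\mathbb{E}_{p(X)}\!\left[f(\Sigma^{-\frac{1}{2}}(X-\mu))\right] = \int_{\mathbb{R}^n} f\!\left(\Sigma^{-\frac{1}{2}}(\mathbf{x}-\mu)\right)\, |\Sigma|^{-\frac{1}{2}}\, p_{X_0}\!\left(\Sigma^{-\frac{1}{2}}(\mathbf{x}-\mu)\right)\, d\mathbf{x}.
\end{equation}

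Next I would introduce the substitution $\mathbf{x}_0 = \Sigma^{-\frac{1}{2}}(\mathbf{x}-\mu)$, so that $\mathbf{x} = \Sigma^{\frac{1}{2}}\mathbf{x}_0 + \mu$. Because $\Sigma$ is a symmetric positive-definite dispersion matrix, $\Sigma^{\frac{1}{2}}$ exists and is invertible, and the Jacobian of this affine map is the constant matrix $\Sigma^{\frac{1}{2}}$, whose determinant has absolute value $|\Sigma|^{\frac{1}{2}}$. Hence $d\mathbf{x} = |\Sigma|^{\frac{1}{2}}\, d\mathbf{x}_0$, and substituting yields
\begin{equation}
\int_{\mathbb{R}^n} f(\mathbf{x}_0)\, |\Sigma|^{-\frac{1}{2}}\, p_{X_0}(\mathbf{x}_0)\, |\Sigma|^{\frac{1}{2}}\, d\mathbf{x}_0 = \int_{\mathbb{R}^n} f(\mathbf{x}_0)\, p_{X_0}(\mathbf{x}_0)\, d\mathbf{x}_0 = \mathbb{E}_{p(X_0)}[f(X_0)].
\end{equation}
The two determinant factors $|\Sigma|^{-\frac{1}{2}}$ and $|\Sigma|^{\frac{1}{2}}$ cancel exactly, which is the crux of the argument, and the remaining integral is by definition the standardized expectation $\mathbb{E}_{p(X_0)}[f(X_0)]$, completing the proof.

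Regarding obstacles: the computation is essentially mechanical, so the only points requiring genuine care are justifying that $\Sigma^{\frac{1}{2}}$ is a well-defined invertible matrix (which follows from the standing assumption that $\Sigma$ is symmetric positive definite) and evaluating the Jacobian determinant of the affine change of variables correctly as $|\Sigma|^{\frac{1}{2}}$. I would also emphasize that the hypothesis that $p_{X_0}$ does not depend on $(\mu,\Sigma)$ is what guarantees the final integral is a genuinely parameter-free standardized expectation rather than an object still entangled with the location and scale. Finally, continuity of $f$ (or merely measurability) together with integrability of the relevant integrand ensures the change of variables is valid and that both sides are well-defined; this is the role played by the continuity assumption on $f$ in the statement.
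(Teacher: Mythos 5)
Your proof is correct and follows essentially the same route as the paper's: both are a change of variables $\mathbf{x}_0 = \Sigma^{-\frac{1}{2}}(\mathbf{x}-\mu)$ in the expectation integral. The paper's version is terser, leaving the Jacobian factor $|\Sigma|^{\frac{1}{2}}$ and its cancellation against $|\Sigma|^{-\frac{1}{2}}$ implicit, whereas you spell these steps out explicitly.
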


Next, we can use the following lemma to demonstrate that the entropy of the standard student-t distribution can be expressed in terms of the gamma function \cite{zografos1999maximum, villa2018objective} with respect to $\nu$:
\begin{lemma}
\label{lemma4}
Let $X_0$ be an N-dimensional standard student-t
random vector, i.e. $p(X_0)=\mathcal{ST}(\nu,0,I)$, then we have,
\begin{equation}
  \mathbb{E} _{p\left( X_0 \right)}\left[ \log \left( 1+\frac{{X_0}^{\top}X_0}{\nu-2} \right) \right] =\Psi \left( \frac{\nu +N}{2} \right) -\Psi \left( \frac{\nu}{2} \right) 
\end{equation}
where $\Psi(\cdot) $  is the digamma function, i.e. $\Psi \left( x \right) =\frac{d\log \varGamma \left( x \right)}{dx}$ for $x \in \mathbb{R}$. 
\end{lemma}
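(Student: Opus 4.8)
The plan is to avoid integrating against the multivariate Student-t density directly and instead exploit its Gaussian scale-mixture representation, which is exactly the content of Lemma~\ref{lemmas}. Taking $\phi=0$, $K=I_N$ and $\rho=1$ there, we may write $X_0\mid r\sim\mathcal{N}(0,r(\nu-2)I_N)$ with $r^{-1}\sim\Gamma(\nu/2,1/2)$, so that marginally $X_0\sim\mathcal{ST}(\nu,0,I)$. Equivalently, $X_0=\sqrt{r(\nu-2)}\,Z$ for a standard Gaussian $Z\sim\mathcal{N}(0,I_N)$ independent of $r$. Setting $Q=Z^\top Z$ and $s=r^{-1}$, the quadratic form collapses to $X_0^\top X_0/(\nu-2)=r\,Q=Q/s$, whence $1+X_0^\top X_0/(\nu-2)=(s+Q)/s$.

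The first key observation is that $Q=Z^\top Z\sim\chi^2_N$ and, since $\Gamma(\nu/2,1/2)$ is precisely the chi-squared law, $s=r^{-1}\sim\chi^2_\nu$, with $Q$ and $s$ independent. Splitting the logarithm then gives
\[
\mathbb{E}_{p(X_0)}\!\left[\log\!\left(1+\tfrac{X_0^\top X_0}{\nu-2}\right)\right]=\mathbb{E}[\log(s+Q)]-\mathbb{E}[\log s].
\]
By additivity of independent chi-squared variables, $s+Q\sim\chi^2_{\nu+N}$, so both terms on the right are log-moments of chi-squared distributions.

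The remaining step, and the main technical point, is the identity $\mathbb{E}[\log W]=\log 2+\Psi(k/2)$ for $W\sim\chi^2_k$. I would prove this by writing $W=2U$ with $U\sim\Gamma(k/2,1)$ and differentiating the normalized gamma integral under the integral sign with respect to its shape parameter, using $\int_0^\infty\log u\,u^{a-1}e^{-u}\,du=\Gamma'(a)$ together with $\Psi=(\log\Gamma)'$; the $\log 2$ arises from the scale. Applying this to $s+Q$ and to $s$ makes the $\log 2$ terms cancel, leaving $\Psi((\nu+N)/2)-\Psi(\nu/2)$, as claimed. The only points needing care for rigor are finiteness of the expectation (immediate, since $\log$ is dominated by any positive power and chi-squared has all moments) and justification of differentiation under the integral sign (standard dominated convergence on the gamma integral).

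As an alternative that bypasses the scale mixture, one can apply a direct differentiation trick: define $I(\alpha)=\int_{\mathbb{R}^N}(1+x^\top x/(\nu-2))^{-\alpha}\,dx=(\pi(\nu-2))^{N/2}\Gamma(\alpha-N/2)/\Gamma(\alpha)$, observe that $-\,\partial_\alpha\log I(\alpha)=\Psi(\alpha)-\Psi(\alpha-N/2)$, and note that the desired expectation equals $-\,\partial_\alpha\log I(\alpha)$ evaluated at $\alpha=(\nu+N)/2$, which again returns $\Psi((\nu+N)/2)-\Psi(\nu/2)$. I would keep the scale-mixture argument as the primary proof, since it directly reuses Lemma~\ref{lemmas} already established in the paper and makes the roles of the two degrees-of-freedom parameters transparent.
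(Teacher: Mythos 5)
Your proposal is correct, but it is genuinely different from what the paper does: the paper offers no derivation at all for this lemma, disposing of it in one line as ``a direct conclusion from'' the cited references \cite{kotz2004multivariate,villa2018objective}. Your scale-mixture argument checks out in every detail under the paper's parameterization: taking $\rho=1$, $K=I$ in Lemma~\ref{lemmas} gives $X_0=\sqrt{r(\nu-2)}\,Z$ with $Z\sim\mathcal{N}(0,I_N)$, and since the paper's $\Gamma(\nu/2,1/2)$ is in shape--rate form, $s=r^{-1}\sim\chi^2_\nu$ exactly; independence and additivity give $s+Q\sim\chi^2_{\nu+N}$, and the log-moment identity $\mathbb{E}[\log W]=\log 2+\Psi(k/2)$ for $W\sim\chi^2_k$ makes the $\log 2$ terms cancel, leaving precisely $\Psi\left(\frac{\nu+N}{2}\right)-\Psi\left(\frac{\nu}{2}\right)$. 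Your alternative route via $I(\alpha)=(\pi(\nu-2))^{N/2}\Gamma(\alpha-N/2)/\Gamma(\alpha)$ is also sound and is consistent with the normalizing constant in the paper's Definition 1. What your approach buys is self-containedness: it proves the identity from ingredients already in the paper (Lemma~\ref{lemmas} plus standard Gaussian and gamma facts), rather than deferring to external references, and it makes transparent why the two degrees-of-freedom arguments $\frac{\nu+N}{2}$ and $\frac{\nu}{2}$ appear --- they are the half-degrees-of-freedom of the two chi-squared variables $s+Q$ and $s$. What the paper's citation buys is only brevity. Either of your two arguments would be a worthwhile replacement for the bare citation; your instinct to prefer the mixture argument, since it reuses Lemma~\ref{lemmas}, is the right one.
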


By utilizing this Lemma \ref{lemma3} and \ref{lemma4}, we can obtain a numerical solution for $\mathcal{L}_1$, that is,
\begin{equation}
\label{l1}
\mathcal{L}_1=\Psi \left( \frac{\Tilde{\nu} +M}{2} \right) -\Psi \left( \frac{\Tilde{\nu}}{2} \right)
\end{equation}

For $\mathcal{L}_2$, since it involves a challenging high-dimensional integration, we resort to Jensen's inequality to derive an upper bound, i.e.,
\begin{equation}
\begin{aligned}
    \mathcal{L}_2
&=\mathbb{E} _{q(\mathbf{u})}\left\{ \log \left[ 1+\frac{1}{\nu -2}\mathbf{u}^{\mathrm{T}}K_{Z,Z'}^{-1}\mathbf{u} \right] \right\} \\&\leqslant \log  \mathbb{E} _{q(\mathbf{u})}\left[ 1+\frac{1}{\nu -2}\mathbf{u}^{\mathrm{T}}K_{Z,Z'}^{-1}\mathbf{u} \right] 
\end{aligned}
\end{equation}
After the calculations, we present a tractable expression in the form of the following theorem:
\begin{theorem}
 An upper bound for $\mathcal{L}_2$, denoted as $\mathcal{L}_2^\star$, can be expressed as follows: 
 \begin{small}
     \begin{equation}
 \label{l2}
     \mathcal{L}_2^\star=
\log \left\{ 1+\frac{1}{\nu-2}\mathrm{Tr}\left( K_{Z,Z'}^{-1}\mathbf{S} \right) +\frac{1}{\nu -2}\mathrm{Tr}\left( K_{Z,Z'}^{-1}\mathbf{mm}^T \right) \right\} 
\end{equation}
\end{small}
\end{theorem}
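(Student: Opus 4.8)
The plan is to pick up exactly where the displayed Jensen step before the theorem leaves off. Concavity of the logarithm already gives $\mathcal{L}_2 \leqslant \log \mathbb{E}_{q(\mathbf{u})}\left[1 + \frac{1}{\nu-2}\mathbf{u}^{\mathrm{T}} K_{Z,Z'}^{-1}\mathbf{u}\right]$, and by linearity of expectation the argument of the logarithm equals $1 + \frac{1}{\nu-2}\,\mathbb{E}_{q(\mathbf{u})}[\mathbf{u}^{\mathrm{T}} K_{Z,Z'}^{-1}\mathbf{u}]$. Hence the whole statement reduces to evaluating the single scalar expectation of the quadratic form $\mathbf{u}^{\mathrm{T}} K_{Z,Z'}^{-1}\mathbf{u}$ under $q(\mathbf{u}) = \mathcal{ST}(\tilde{\nu}, \mathbf{m}, \mathbf{S})$. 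Throughout one must keep two matrices carefully distinct: the scale matrix $\mathbf{S}$ (with degrees of freedom $\tilde{\nu}$) governs the distribution $q$, whereas the fixed matrix $K_{Z,Z'}^{-1}$ comes from the prior $p(\mathbf{u})$ and merely plays the role of the quadratic form.

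To evaluate this expectation I would invoke the Gaussian scale-mixture representation of Lemma \ref{lemmas}. Taking $\rho = 1$ as in the reparameterization, we may write $\mathbf{u} \mid r \sim \mathcal{N}(\mathbf{m}, r(\tilde{\nu}-2)\mathbf{S})$ with $r^{-1} \sim \Gamma(\tilde{\nu}/2, 1/2)$, so that marginally $\mathbf{u} \sim q(\mathbf{u})$. Conditionally on $r$, the standard Gaussian identity $\mathbb{E}[\mathbf{u}^{\mathrm{T}} A \mathbf{u}] = \mathrm{Tr}(A\,\mathrm{Cov}(\mathbf{u})) + \mathbb{E}[\mathbf{u}]^{\mathrm{T}} A\,\mathbb{E}[\mathbf{u}]$ with $A = K_{Z,Z'}^{-1}$ yields $\mathbb{E}[\mathbf{u}^{\mathrm{T}} K_{Z,Z'}^{-1}\mathbf{u} \mid r] = r(\tilde{\nu}-2)\mathrm{Tr}(K_{Z,Z'}^{-1}\mathbf{S}) + \mathbf{m}^{\mathrm{T}} K_{Z,Z'}^{-1}\mathbf{m}$. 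It then remains to take the expectation over $r$, for which I would use that $r$ is inverse-gamma distributed with $\mathbb{E}[r] = \frac{1}{\tilde{\nu}-2}$ (valid for $\tilde{\nu} > 2$). The factor $(\tilde{\nu}-2)$ cancels exactly, leaving $\mathbb{E}_{q(\mathbf{u})}[\mathbf{u}^{\mathrm{T}} K_{Z,Z'}^{-1}\mathbf{u}] = \mathrm{Tr}(K_{Z,Z'}^{-1}\mathbf{S}) + \mathbf{m}^{\mathrm{T}} K_{Z,Z'}^{-1}\mathbf{m}$.

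Finally I would rewrite the rank-one term via the cyclic property of the trace, $\mathbf{m}^{\mathrm{T}} K_{Z,Z'}^{-1}\mathbf{m} = \mathrm{Tr}(K_{Z,Z'}^{-1}\mathbf{m}\mathbf{m}^{\mathrm{T}})$, substitute back into the Jensen bound, and take the logarithm to recover exactly $\mathcal{L}_2^\star$. The main obstacle is really just the bookkeeping in the middle step: correctly reading off $\mathbb{E}[r]$ from the chosen Gamma parameterization and verifying the clean cancellation of $(\tilde{\nu}-2)$, all while not conflating the scale matrix $\mathbf{S}$ of $q$ with the prior matrix $K_{Z,Z'}$. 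An equivalent and slightly shorter route is to note that, under this parameterization, Lemma \ref{lemmas} forces the covariance of $q(\mathbf{u})$ to equal $\mathbf{S}$ outright (the $(\tilde{\nu}-2)$ normalization in the density is designed precisely so that scale matrix equals covariance), so the quadratic-form identity can be applied directly to $q(\mathbf{u})$ with mean $\mathbf{m}$ and covariance $\mathbf{S}$, bypassing the explicit conditioning on $r$ entirely.
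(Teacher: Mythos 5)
Your proposal is correct, and its central computation takes a genuinely different route from the paper's. After the identical Jensen step, the paper evaluates $\mathbb{E}_{q(\mathbf{u})}\left[\mathbf{u}^{\mathrm{T}}K_{Z,Z'}^{-1}\mathbf{u}\right]$ purely algebraically: it inserts $\mathbf{u}=(\mathbf{u}-\mathbf{m})+\mathbf{m}$, expands the quadratic form, drops the two cross terms (which vanish because $\mathbb{E}_{q}[\mathbf{u}-\mathbf{m}]=0$), and applies the trace trick together with $\mathbb{E}_{q}\left[(\mathbf{u}-\mathbf{m})(\mathbf{u}-\mathbf{m})^{\mathrm{T}}\right]=\mathbf{S}$; that last equality --- scale matrix equals covariance under the paper's $(\nu-2)$ parameterization of the Student-t density --- is used silently. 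Your main route instead conditions on the mixing variable $r$ of the scale-mixture representation (Lemma \ref{lemmas} with $\rho=1$), applies the Gaussian quadratic-form identity given $r$, and integrates out $r$ using the inverse-gamma mean $\mathbb{E}[r]=1/(\tilde{\nu}-2)$, after which the $(\tilde{\nu}-2)$ factors cancel. This is slightly longer but buys something real: it \emph{derives}, rather than assumes, that the covariance of $q(\mathbf{u})=\mathcal{ST}(\tilde{\nu},\mathbf{m},\mathbf{S})$ is exactly $\mathbf{S}$ in this parameterization, and it makes the moment-existence condition $\tilde{\nu}>2$ explicit. Your closing ``shortcut'' --- apply the quadratic-form identity $\mathbb{E}[\mathbf{u}^{\mathrm{T}}A\mathbf{u}]=\mathrm{Tr}(A\,\mathrm{Cov}(\mathbf{u}))+\mathbb{E}[\mathbf{u}]^{\mathrm{T}}A\,\mathbb{E}[\mathbf{u}]$ directly to $q$ --- is in substance exactly the paper's proof, since the paper's mean-centering expansion is nothing but a proof of that identity. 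One bookkeeping point in your favor: you keep the prior's $\frac{1}{\nu-2}$ factor and the variational degrees of freedom $\tilde{\nu}$ carefully separate, which the final bound (where only $\nu$ appears) requires and which is easy to conflate; the paper's own displayed proof even writes $\nu$ where $\tilde{\nu}$ is meant inside the expectation over $q$, a slip your derivation avoids.
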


In this paper, we refer to this approach as SVTP-UB because it computes an upper bound for the KL regularization term. Regarding the applicability of these two methods, the paper suggests that since the main purpose of the KL regularization term is to prevent overfitting, which often occurs when there is insufficient data, it is recommended to use SVTP-UB with a larger regularization term in regression tasks with smaller datasets. On the other hand, for regression tasks with larger datasets, it is recommended to use SVTP-MC.
\subsubsection{Stochastic Gradient Descent}
Based on Eq. (\ref{l1}) and Eq. (\ref{l2}), we derive a new lower bound on $\log p(\mathbf{y})$ as follows:
\begin{equation}
\begin{aligned}
    \log p(\mathbf{y})\geqslant \mathcal{L} ^{\star}(q)&=\mathbb{E} _{p(\mathbf{f}|\mathbf{u})q(\mathbf{u})}[\log p(\mathbf{y}|\mathbf{f})]-C\left( \nu ,\tilde{\nu},\mathbf{S} \right) \\&+\frac{\tilde{\nu}+M}{2}\mathcal{L} _1-\frac{\nu +M}{2}\mathcal{L} _{2}^{\star}
\end{aligned}
\end{equation}

By exploiting the independence of $y_i$, we can express $\log p(\mathbf{y}|\mathbf{f})$ as the sum of individual terms:
\begin{equation}
\log p(\mathbf{y}|\mathbf{f})=\sum\nolimits_{i=1}^n{\log p(y_i|f_i)}
\end{equation}

According to Eq. (\ref{condi}), the marginal distribution of $f_i|\mathbf{u}$ can be defined as $p(f_i|\mathbf{u})=\mathcal{ST}(\nu+M,\mu_i,\frac{\nu+\beta-2}{\nu+M-2}\Sigma_i)$, where $\mu_i=K_{\mathbf{x_i},Z}K_{Z,Z'}^{-1}\mathbf{u}$, $\beta=\mathbf{u}^T K_{Z,Z'}^{-1}\mathbf{u}$, and $\Sigma=K_{\mathbf{x_i},\mathbf{x_i}}-K_{\mathbf{x_i},Z}K_{Z,Z'}^{-1}K_{Z,\mathbf{x_i}}$. It is worth noting that $f_i$ only depends on the corresponding inputs $\mathbf{x_i}$.

Therefore, we can approximate the entire dataset by leveraging the idea of stochastic gradient descent and using a mini-batch of data as follows:
\begin{equation}
\begin{aligned}
    &\sum\nolimits_{i=1}^n{\mathbb{E} _{p(f_i|\mathbf{u})q(\mathbf{u})}\log p(y_i|f_i)}\\\approx& \frac{n}{B}\sum\nolimits_{i=1}^B{\mathbb{E} _{p(f_i|\mathbf{u})q(\mathbf{u})}\log p(y_i|f_i)}
\end{aligned}
\end{equation}

%Although this approach introduces additional noise, it serves to accelerate convergence. Furthermore, this method is applicable to any likelihood function $p(y_i|f_i)$.
%\vspace{-0.2cm}
\subsection{Prediction and Relationship between SVGP}

\begin{comment}
\begin{lemma}
    Suppose $f$ $\sim$ $\mathcal{ST}(\nu, \Phi, k)$ and $g$ $\sim$ $\mathcal{GP}(\Phi, k)$. Then $f$ tends to $g$ in distribution as $\nu\rightarrow\infty$.
\end{lemma}    
\end{comment}

 We present algorithms for posterior sampling and prediction using our proposed Sparse Variational Student-t Process (SVTP). Specially, we first sample $\mathbf{u}$ from $q(\mathbf{u})=\mathcal{ST}(
\widetilde{\nu }, \mathbf{m},\mathbf{S})$, then sample $\mathbf{f}$ from $p(\mathbf{f}|\mathbf{u})$. To make predictions at new input points $X_*$, we compute the mean and covariance matrix of the predictive distribution:
\begin{equation}
    \begin{aligned}
    \mu_*&=\boldsymbol{k}_*^TK_{Z,Z'}^{-1}\mathbf{u}\\
    \Sigma_*&=\frac{\nu+\beta-2}{\nu+M-2}(k_{**} - \boldsymbol{k}_*^TK_{Z,Z'} ^{-1}\boldsymbol{k}_*)
    \end{aligned}
\end{equation}
where $\beta=\mathbf{u}^T K_{Z,Z'}^{-1}\mathbf{u}$, $\boldsymbol{k}_*$ denotes the kernel vector between inducing points $Z$ and the new input points $X_*$, $k_{**}$ is the kernel between the new input points. We can finally obtain the predicted distribution $\mathbf{f}^*\sim\mathcal{ST}(\nu+M,\mu_*,\Sigma_*)$, $\mathbf{y}^*=\mathbf{f}^*+ \boldsymbol{\epsilon}$, where $\boldsymbol{\epsilon}=\left\{\epsilon_i\right\}_{i=1}^N$  . 

Specifically, we have discovered an interesting fact that when we set $q(\mathbf{u})=\mathcal{ST}(
\nu +n, \mathbf{m},\mathbf{S})$ , we can analytically marginalize $\mathbf{u}$, resulting in:
\begin{equation}
    q\left( \mathbf{f} \right) =\int{p\left( \mathbf{f}|\mathbf{u} \right) q\left( \mathbf{u} \right)}d\mathbf{u}=\mathcal{ST}\left( \nu ,\mu ',\Sigma ' \right) 
\end{equation}
where $\mu'=K_{X,Z}K_{Z,Z'}^{-1}\mathbf{m}$ and $\Sigma'=K_{X,X'}-K_{X,Z}K_{Z,Z'}^{-1}(K_{Z,Z'}-\mathbf{S})K_{Z,Z'}^{-1}K_{Z,X}$.

In this case, the mean and covariance matrix of the marginal distribution $q( \mathbf{f})$ are consistent with SVGP. Furthermore, we have shown in the following theorem the relationship between SVGP and SVTP, stating that SVGP is an extreme case of SVTP.
\begin{theorem}
  as $\nu\rightarrow\infty$, the posterior distribution of SVTP converges in distribution to the posterior distribution of SVGP.
\end{theorem}
For a Student-t distribution, the parameter $\nu$ controls the degree of tail-heaviness. Smaller $\nu$ values correspond to heavier tails, while larger $\nu$ values make the tails more closely resemble those of a Gaussian distribution. Therefore, it is a natural extension to apply Student-t distributions to sparse scenarios.
%\vspace{-0.4cm}
\section{Outliers in Regression Analysis}
When discussing the differences between SVTP  and SVGP  in handling outliers in regression analysis, %we can initially ignore the influence of the noise term $\epsilon_i$ and assume that $y_i = f_i$. 
following \cite{hensman2015scalable}, the variational lower bound for SVGP can be expressed as:
\begin{equation}
\mathcal{L}_{\text{SVGP}}=
\mathbb{E} _{q\left( \mathbf{u} \right)}\left[ \log  p\left( \mathbf{y}|\mathbf{u} \right) \right] -\mathrm{KL}\left( q\left( \mathbf{u} \right) ||p\left( \mathbf{u} \right) \right) 
\end{equation}
For the first term,
\begin{equation}
\label{elbogp}
\begin{aligned}
    &\mathbb{E} _{q\left( \mathbf{u} \right)}\left[ \log  p\left( \mathbf{y}|\mathbf{u} \right) \right] \\=&-\frac{1}{2}\mathbb{E} _{q\left( \mathbf{u} \right)}\left[ \left( \mathbf{y}-\mu \right) ^TS^{-1}\left( \mathbf{y}-\mu \right) \right] -\frac{1}{2}\log |\Sigma |+C
\end{aligned}    
\end{equation}
where $\mu,\Sigma$ are defined as in Eq.(\ref{condi}), and C is a constant independent of $(X, \mathbf{y})$.
The first term of the ELBO for SVTP, obtained from the previous section, can be defined as:
\begin{equation}
\label{elbotp}
\begin{aligned}
    &\mathbb{E} _{q\left( \mathbf{u} \right)}\left[ \log  p\left( \mathbf{y}|\mathbf{u} \right) \right] \\=&-\frac{\nu +n}{2}\mathbb{E} _{q\left( \mathbf{u} \right)}\left[ \log  \left( 1+\frac{\left( \mathbf{y}-\mu \right) ^TS^{-1}\left( \mathbf{y}-\mu \right)}{\nu -2} \right) \right] \\&-\frac{1}{2}\log |\Sigma |+C
\end{aligned}    
\end{equation}
We can see that the only difference between Eq.(\ref{elbogp}) and Eq. (\ref{elbotp}) is
that the term $
\left( \mathbf{y}-\mu \right) ^TS^{-1}\left( \mathbf{y}-\mu \right) 
$ in Eq.(\ref{elbogp}) is the result
of a log transformation of the term  in Eq.(\ref{elbotp}). If there
are input outliers, this term  would be disturbed and
the log transformation can reduce the disturbance. Therefore,
the ELBO of SVTP  is
more robust to input outliers than that of SVGP.
%\vspace{-0.5cm}

\section{Experiments}

We evaluate the performance of our proposed methods, SVTP-UB and SVTP-MC, on a total of eight real-world datasets obtained from UCI and Kaggle. These datasets are used for comparison against two baseline models, namely Sparse Variational Gaussian Processes (SVGP) and the full Student-T Processes (TP). Our experiments are designed to verify the following four
propositions:
\begin{itemize}
    \item Compared to TP, our proposed sparse representation method SVTP significantly reduces computational complexity, which facilitates the extension of TP method to higher-dimensional datasets.

    \item As an inducing point method, SVTP demonstrates higher regression accuracy and more robust uncertainty estimation compared to SVGP.
    \item SVTP has a greater advantage over traditional SVGP when dealing with outlier data.
    \item SVTP-MC demonstrates superior performance on larger datasets, while SVTP-UB serves as an effective method to prevent overfitting on smaller datasets.
\end{itemize}

\subsection{Datasets and  Experimental Setup}
\label{Datasets}

We use  eight data sets to carry out the experiments, for which details can be seen in the appendix.
\begin{comment}
    \begin{itemize}
    \item \textbf{Concrete Data}. This data set is about the slump flow of
concrete. It contains 1030 instances and 9 attributes.
    \item \textbf{Wine Data}. This data set contains 12 attributes and 1599
instances associated with red wine. The attributes include acidity,
residual sugar, pH and so on. The output variable is quality (score
between 0 and 10)
.......................................
\end{itemize}
\end{comment}
We utilize all algorithms with a maximum iteration number of 5000 to minimize the negative ELBO. We set the batch size to 1024. The learning rate is set to 0.01, and the data is standardized. The noise term $\epsilon_i$ 
  is fixed at 0.10 in the experiments for fair comparison of the performance of each model . We opt to use the PyTorch platform and conduct all experiments on a single NVIDIA  A100 GPU. 

\begin{table}[t]
\label{time1}

\centering

\resizebox{\linewidth}{!}{
\begin{tabular}{|c|c|c|c|c|}

\hline
Dataset & Datasize & $m=n/10$&$m=n/4$  & Full TP\\
\hline
Yacht & (307,6) & 0.042s& 0.045s& 0.058s \\
\hline
Energy &(767,8)  & 0.043s& 0.046s& 0.075s \\
\hline
Boston & (505,13) & 0.042s& 0.045s& 0.062s \\
\hline
Elevator & (16599,18) & 0.931s& 2.412s& 97.977s \\
\hline
Concrete & (1029,8) & 0.050s& 0.061s& 0.110s \\
\hline
Protein & (45730,9) & 12.44s& 97.93s& - \\
\hline
Kin8nm & (8192,8) &0.45s & 0.98s& 5.49s \\
\hline

Taxi & (209673,7) & -& -& - \\
\hline
\end{tabular}}
\caption{ Comparison of Computational Complexity on regression experiments. In this context, $m$ refers to the number of inducing points. "$-$" stands for not having enough space and computational power to calculate its time.}

\end{table}

\subsection{Comparison of Computational Complexity between SVTP and Full TP}

\begin{table*}[ht]
\label{mse}
\centering
\begin{tabular*}{\textwidth}{@{\extracolsep{\fill}}|c|*{6}{c|}}
\hline
\multirow{2}{*}{Dataset} & \multicolumn{3}{c|}{MSE} &  \multicolumn{3}{c|}{LL} \\ \cline{2-7}
 & SVGP & SVTP+UB & SVTP+MC & SVGP & SVTP+UB & SVTP+MC \\ \hline
Yacht & $7.56\pm0.23$ & $\mathbf{1.44}\pm0.02$ & $1.55\pm0.03$ & $-485\pm2.4$ & $\mathbf{-114}\pm0.5$ &$-122\pm0.7$\\ \hline
Energy &  $2.02 \pm 0.06$  &  $\mathbf{0.74} \pm 0.04$ & $0.84 \pm 0.05$ & $-156\pm 1.8$ & $\mathbf{-63.3}\pm 1.2$ & $-70.5\pm 1.3 $ \\ \hline
Boston & $3.17\pm0.21$ & $\mathbf{1.82}\pm0.10$ & $2.01 \pm 0.12$ & $-171\pm4.8$ & $-\mathbf{126}\pm2.6$ & $-140\pm2.9 $\\ \hline
Elevator & $6.54 \pm 0.28$ & $1.93\pm 0.15 $& $\mathbf{1.84}\pm 0.11$ & $-391 \pm 19.2$ & $-126.1\pm10.2$   & $-\mathbf{106} \pm 8.7$ \\ \hline
Concrete & $7.48\pm0.24$ & $5.71\pm0.20$ & $\mathbf{3.23}\pm0.18$ & $-478 \pm 11.4$ & $-435\pm 9.0$ & $\mathbf{-276}\pm7.4$ \\ \hline
Protein & $7.23\pm0.26$ & $5.64\pm0.22$ & $\mathbf{4.86}\pm0.14$ & $-425\pm19.3$ & $-295\pm15.5$ & $\mathbf{-260}\pm9.8$ \\ \hline
Kin8nm & $5.19\pm0.52 $& $4.35\pm0.47 $&$ \mathbf{3.85}\pm0.46$ & $-400\pm28.6$ & $-385\pm22.5$ & $-\mathbf{225}\pm19.4$ \\ \hline
Taxi & $0.76\pm0.07$ & $0.54\pm0.05$ & $\mathbf{0.41}\pm0.04$ & $-301\pm8.9$ & $-152\pm6.4$ & $-\mathbf{124}\pm4.6$ \\ \hline
Concrete$\_$Outliers & $12.20\pm 0.32$ & $8.62\pm0.25$ & $\mathbf{6.40}\pm 0.22$& $-658\pm 20.8$ & $-471.0\pm14.2$ & $-\mathbf{336.3}\pm13.6$\\ \hline
Kin8nm$\_$Outliers & $14.66\pm2.25$ & $9.12\pm 1.72$ & $\mathbf{8.86}\pm 0.65$ & $-1169\pm61.9$ & $-622\pm52.4$ & $-\mathbf{501}\pm31.4 $ \\ \hline
\end{tabular*}
\caption{Predictive Mean Squared Errors (MSE) and test Log Likelihoods (LL) of regression experiments}
\end{table*}
To empirically validate the computational efficiency of SVTP, we conducted experiments on eight different  datasets. We compared the average time taken by SVTP-UB for completing one epoch (i.e., iterating over the entire dataset) with that of the conventional full TP method. The results are presented in Table 1, showing a significant reduction in computational complexity with the decreasing number of inducing points in SVTP. Consequently, the average time taken for one epoch decreases accordingly, thus substantiating the method's effectiveness in terms of computational efficiency.

\subsection{Real-world Dataset Regression}

 Unlike the baseline method proposed by  \cite{shah2014student}, our approach significantly reduces computational complexity, allowing us to test our method on the entire dataset instead of just a small subset as a training set. 

Each experiment was repeated using five-fold cross-validation, and the average value and range were reported for each repetition. The evaluation metrics used to assess the performance are the Mean Squared Error (MSE), and test Log Likelihood (LL).  We selected the number of inducing points to be one-fourth of the original data size and employed the squared exponential kernel. For large datasets such as Protein and Taxi, the number of inducing points are selected to be one-fourth of the batchsize. The experimental findings, presented in Table 2, demonstrate that our SVTP methods outperform SVGP across all datasets. This empirically establishes the improved accuracy and uncertainty estimation of SVTP.
\subsection{Regression Experimental Analysis}
As discussed in the previous sections, in order to investigate the superior performance of the SVTP, we conducted empirical density analysis and kernel density estimation on  the standardized y for all datasets. The results, depicted in Figure 1, revealed the presence of varying degrees of outliers, irregularities, and heavy-tailed characteristics in many of the datasets. Among them, Yacht and Taxi exhibited particularly prominent features. It is noteworthy that SVTP outperformed SVGP to a significant extent on these two datasets, further validating the effectiveness of their approach.

\begin{figure}[t]
\label{figure1}
  \centering
  \includegraphics[width=\linewidth]{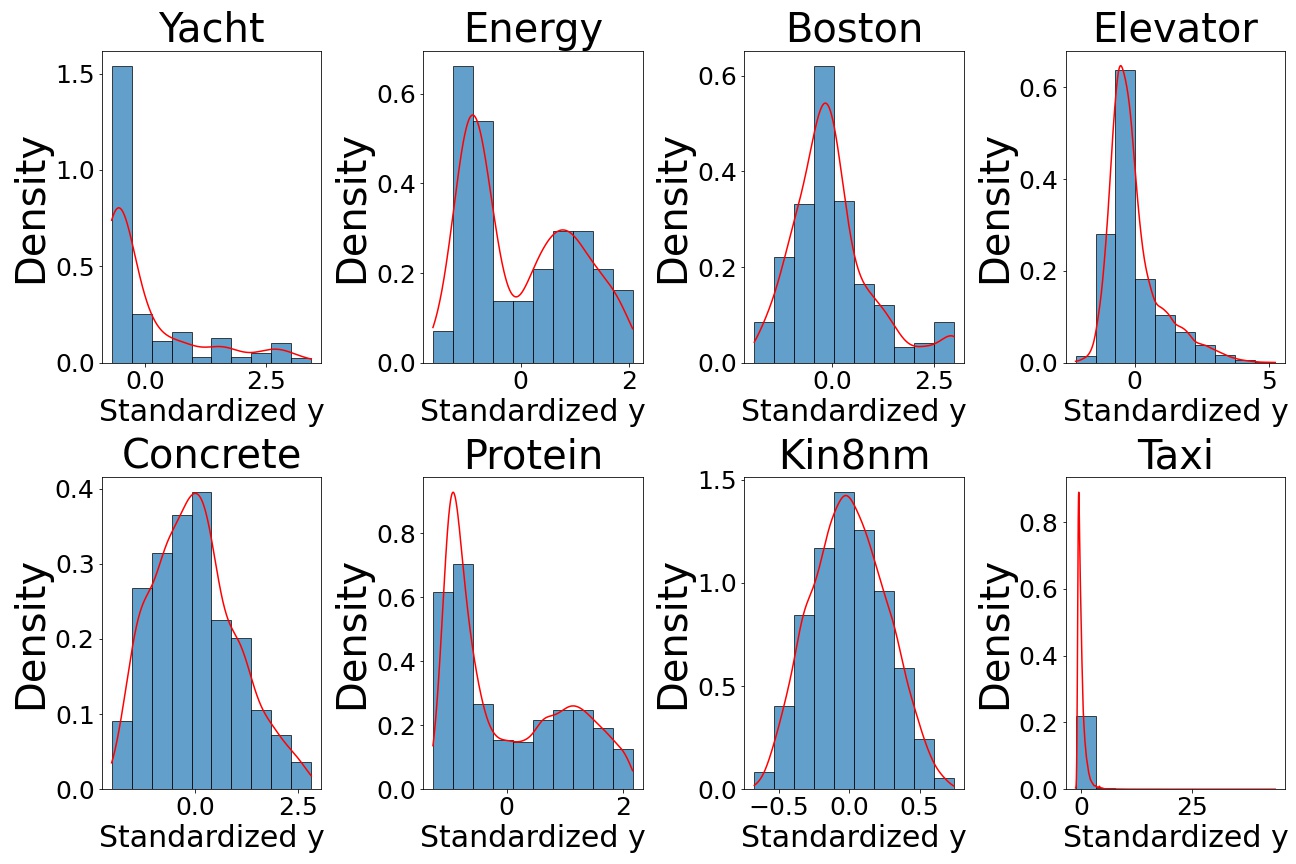}
  \caption{Empirical density analysis and kernel density estimation on the experimental datasets}
  \label{fig:image}
\end{figure}
Additionally, we selected two datasets, Concrete and Kind8nm, to perform outlier regression. We randomly added three times the standard deviation to the y-values of 5$\%$ of the data to synthesize data with more outliers.The results of this analysis are also presented in Table 1, demonstrating the performance of the model on these datasets.

In this section, we also conducted an empirical analysis of the differences between SVTP-UB and SVTP-MC methods in computing the KL regularization term. We selected four datasets: Boston, Elevator, Concrete, and Protein, and obtained the specific values of the KL regularization term at convergence using these two algorithms. We report the results in Table 3. Upon analysis, we observed that SVTP-UB tends to produce larger estimates of the KL regularization term. Considering the findings from Table 1, where SVTP-UB generally outperforms SVTP-MC in terms of convergence for large datasets, we can deduce that SVTP-UB's tighter constraints negatively impact convergence in this scenario. On the other hand, for smaller datasets, where overfitting is a common concern due to limited data availability, these tighter constraints provided by SVTP-UB become crucial in preventing overfitting. Therefore, taking into account the impact of dataset size on overfitting and convergence, we recommend employing SVTP-MC for larger datasets. This method strikes a better balance between the regularization term and the dataset size, enabling smoother convergence. Conversely, for smaller datasets where the risk of overfitting is more pronounced, SVTP-UB should be the preferred option.

\begin{table}[t]
\centering

\renewcommand{\arraystretch}{0.25} % 控制行高
\resizebox{\linewidth}{!}{
\begin{tabular}{|c|c|c|}
\hline
{\tiny Datasets} &  {\tiny SVGP+UB} & {\tiny SVGP+MC}\\

\hline
{\tiny Boston} &  {\tiny 56} &{\tiny 2.2} \\
\hline
{\tiny Elevator} &   {\tiny 42}& {\tiny 3.1}\\
\hline
{\tiny Concrete} &   {\tiny 78}& {\tiny 1.3} \\
\hline
{\tiny Protein} &   {\tiny 39}& {\tiny 6.3} \\
\hline

\end{tabular}
}
\caption{Comparison of the average values of the KL-term on all datasets}
\end{table}

\section{Conclusion}

In this paper, we proposed SVTP, a novel approach that extends the benefits of sparse representation from Gaussian Processes to Student-t Processes. The effectiveness of the approach is demonstrated through experiments on both synthetic and real-world datasets. The results show that the Sparse Student-t Process can handle large-scale datasets while also accurately modeling non-Gaussian and heavy-tailed distributions. In terms of future work, we suggest exploring the application of the Sparse Student-t Process to other likelihoods, such as classification and multi-output regression. While the current focus of the paper is on modeling regression tasks, extending the approach to handle different types of data and problems could further enhance its versatility and applicability.
\section{Acknowledgments}
The work is supported by the Fundamental Research Program of Guangdong, China, under Grant 2023A1515011281; and in part by the National Natural Science Foundation of China under Grant 61571005.
\appendix
\section{Proof of Lemmas and Theorems.}
\begin{lemma}
\label{lemma1}
Suppose that $y \sim \mathcal{S}\mathcal{T}_n(\nu,\phi,K)$ is partitioned into $y_1$ and $y_2$ with dimensions $n_1$ and $n_2$, respectively. Let $\phi_1$, $\phi_2$, $K_{11}$, $K_{12}$, and $K_{22}$ denote the corresponding partitioned matrices, i.e, 
$$\phi = (\phi_1, \phi_2), \qquad K = \begin{pmatrix}K_{11} & K_{12} \\ K_{12}^T & K_{22}\end{pmatrix}.$$ 
Then, given $y_1$, the conditional distribution of $y_2$ can be expressed as $\mathcal{S}\mathcal{T}_{n_2}(\nu+n_1,\phi_2^*,\frac{\nu+\beta_1-2}{\nu+n_1-2}t^*)$, where $\phi_2^* = K_{21}K_{11}^{-1}(y_1 - \phi_1) + \phi_2, \beta_1 = (y_1 - \phi_1)^TK_{11}^{-1}(y_1 - \phi_1)$ and $t^* = K_{22} - K_{21}K_{11}^{-1}K_{12}$. The mean and covariance matrix of the conditional distribution of $y_2$ given $y_1$ are $\mathbb{E}[y_2|y_1] = \phi_2^*$ and $\text{cov}[y_2|y_1] = \frac{\nu+\beta_1-2}{\nu+n_1-2}t^*$, respectively.
\end{lemma}
\begin{proof}
 Let $\beta_2=\left(\boldsymbol{y}_{\mathbf{2}}-\boldsymbol{\phi}^*_2\right)^{\top} t^{*-1}\left(\boldsymbol{y}_2-\boldsymbol{\phi}^*_2\right)$. Note that $\beta_1+\beta_2=(\boldsymbol{y}-\boldsymbol{\phi})^{\top} K^{-1}(\boldsymbol{y}-\boldsymbol{\phi})$. We have
\begin{equation*}
\begin{aligned}
&p\left(\boldsymbol{y}_{\mathbf{2}} \mid \boldsymbol{y}_{\mathbf{1}}\right)=\frac{p\left(\boldsymbol{y}_{\mathbf{1}}, \boldsymbol{y}_{\mathbf{2}}\right)}{p\left(\boldsymbol{y}_{\mathbf{1}}\right)} \\& \propto\left(1+\frac{\beta_1+\beta_2}{\nu-2}\right)^{-(\nu+n) / 2}\left(1+\frac{\beta_1}{\nu-2}\right)^{\left(\nu+n_1\right) / 2} \\
& \propto\left(1+\frac{\beta_2}{\beta_1+\nu-2}\right)^{-(\nu+n) / 2}
\end{aligned}
\end{equation*}
Comparing this expression to the definition of a $\mathcal{ST}$ density function gives the required result.
\end{proof}

\begin{lemma}
\label{lemmas}
Let $K$ be an $n \times n$, symmetric, positive definite matrix, $\phi \in \mathbb{R}^n$, $\nu >0$, $\rho >0$, if
\begin{equation}
\begin{aligned}
r^{-1} & \sim \Gamma(\nu / 2, \rho / 2) \\
\boldsymbol{y} \mid r & \sim \mathrm{N}_{n}(\boldsymbol{\phi}, r(\nu-2) K / \rho),
\end{aligned}
\end{equation}
then, marginally $
\boldsymbol{y}\sim \mathcal{S} \mathcal{T} (\nu ,\phi ,K)$.   \end{lemma}
\begin{proof}
     Let $\beta=(\boldsymbol{y}-\boldsymbol{\phi})^{\top} K^{-1}(\boldsymbol{y}-\boldsymbol{\phi})$. We can analytically marginalize out the scalar $r$,
$$
\begin{aligned}
&p(\boldsymbol{y})=\int p(\boldsymbol{y} \mid r) p(r) d r\\ & \propto \int \exp \left(-\frac{\rho \beta}{2(\nu-2) r}\right) r^{-\frac{n}{2}} \exp \left(-\frac{\rho}{2 r}\right) r^{-\frac{(\nu+2)}{2}} d r \\
& \propto\left(1+\frac{\beta}{\nu-2}\right)^{-\frac{(\nu+n)}{2}} \int \exp \left(-\frac{1}{2 r}\right) r^{-\frac{(\nu+n+2)}{2}} d r \\
& \propto\left(1+\frac{\beta}{\nu-2}\right)^{-\frac{(\nu+n)}{2}}
\end{aligned}
$$
Hence $\boldsymbol{y} \sim \mathcal{ST}(\nu, \boldsymbol{\phi}, K)$.
\end{proof}

\begin{lemma}

Consider 
$A \in \mathbb{R}^{n\times n}$, $b \in \mathbb{R}^n$, $\mu \in \mathbb{R}^n$, $\Sigma \in \mathbb{R}^{n\times n}$. Let $X$ be an $n$-dimensional Gaussian random vector s.t. $p(X)=\mathcal{N}(\mu,\Sigma)$. Then, the random
vector $Y = AX + b$ has the pdf:
$p(Y)=\mathcal{N}(A\mu + b,A^T\Sigma A)$
\end{lemma}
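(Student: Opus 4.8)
The plan is to establish the result in two stages: first pin down the mean and covariance of $Y$ by direct moment computation, and then verify that $Y$ is genuinely Gaussian, not merely that it carries the claimed first two moments. The first stage is immediate from linearity of expectation: $\mathbb{E}[Y] = \mathbb{E}[AX+b] = A\mathbb{E}[X] + b = A\mu + b$, and the centered second moment is $\mathbb{E}[(Y-\mathbb{E}[Y])(Y-\mathbb{E}[Y])^\top] = A\,\mathbb{E}[(X-\mu)(X-\mu)^\top]\,A^\top = A\Sigma A^\top$. So the parameters fall out mechanically; the real content of the lemma is that the distribution is again of Gaussian form.

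For the second stage I would use the moment generating function (equivalently the characteristic function), since this argument is short and, unlike a density substitution, does not require $A$ to be invertible. Starting from the known Gaussian MGF $M_X(\mathbf{t}) = \exp(\mathbf{t}^\top \mu + \tfrac{1}{2}\mathbf{t}^\top \Sigma \mathbf{t})$, I would compute
$$M_Y(\mathbf{t}) = \mathbb{E}[e^{\mathbf{t}^\top(AX+b)}] = e^{\mathbf{t}^\top b}\,\mathbb{E}[e^{(A^\top \mathbf{t})^\top X}] = e^{\mathbf{t}^\top b}\,M_X(A^\top \mathbf{t}),$$
then substitute the Gaussian MGF and collect terms to arrive at $\exp(\mathbf{t}^\top(A\mu+b) + \tfrac{1}{2}\mathbf{t}^\top A\Sigma A^\top \mathbf{t})$, which is exactly the MGF of $\mathcal{N}(A\mu+b,\,A\Sigma A^\top)$. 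Since the MGF determines the distribution uniquely, this closes the argument.

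As an alternative that produces the density explicitly (matching the ``pdf'' phrasing of the statement), when $A$ is invertible I would instead invoke the change-of-variables formula: writing $X = A^{-1}(Y-b)$ with Jacobian determinant $|\det A|^{-1}$, substitute into $p(X)$ and rewrite the quadratic form in the exponent using $(A^{-1})^\top \Sigma^{-1} A^{-1} = (A\Sigma A^\top)^{-1}$, recovering the Gaussian density with covariance $A\Sigma A^\top$. The one subtlety worth flagging is precisely this covariance: the transpose placement in the statement ($A^\top \Sigma A$) should read $A\Sigma A^\top$, which is what both derivations yield; in the paper's intended application $X = \boldsymbol{\epsilon}_0 \sim \mathcal{N}(0,I)$ and $A$ is a Cholesky factor, so this distinction is exactly what guarantees that the reparameterized sample carries the correct dispersion. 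The only genuine obstacle is the degenerate (non-invertible $A$) case, which the MGF route handles automatically by producing a possibly singular Gaussian, whereas the change-of-variables route breaks down; I would therefore present the MGF argument as the primary proof.
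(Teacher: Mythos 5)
Your proof is correct and takes essentially the same route as the paper: the paper also argues via the transform uniqueness method (characteristic function $\phi_Y(t)=e^{it^\top b}\phi_X(A^\top t)$, which is interchangeable with your MGF computation), factoring out the shift and collecting terms to identify $\mathcal{N}(A\mu+b,\,A\Sigma A^\top)$. Your flag of the transpose typo is also consistent with the paper, whose own proof concludes with $A\Sigma A^\top$ despite the statement writing $A^\top\Sigma A$.
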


\begin{proof}
 Let $\phi_X(t)=\mathbb{E}\left[\exp \left(i t^{\top} X\right)\right]$ be the characteristic function of a random variable $X \in \mathbb{R}^n$.
If $X$ is normally distributed $X \sim \mathcal{N}(\mu, \Sigma)$, then we have $\phi_X(t)=\exp \left(i t^{\top} \mu-\frac{1}{2} t^{\top} \Sigma t\right)$
If $Y = AX + b$, then
$$
\begin{aligned}
\phi_Y(t) & =\mathbb{E}\left[\exp \left(i t^{\top}(A X+b)\right)\right] \\
& =\mathbb{E}\left[\exp \left(i t^{\top} b\right) \exp \left(i t^{\top} A X\right)\right] \\
& =\exp \left(i t^{\top} b\right) \mathbb{E}\left[\exp \left(i\left(A^{\top} t\right)^{\top} X\right)\right] \\
& =\exp \left(i t^{\top} b\right) \phi_X\left(A^{\top} t\right) \\
& =\exp \left(i t^{\top} b\right) \exp \left(i\left(A^{\top} t\right)^{\top} \mu-\frac{1}{2}\left(A^{\top} t\right)^{\top} \Sigma\left(A^{\top} t\right)\right) \\
& =\exp \left(i t^{\top}(A \mu+b)-\frac{1}{2} t^{\top} A \Sigma A^{\top} t\right)
\end{aligned}
$$
Since the characteristic function uniquely defines the distribution, we have $Y \sim \mathcal{N}\left(A \mu+b, A \Sigma A^{\top}\right)$ .   
\end{proof}

\begin{lemma}
\label{lemma3}
     Let $ p_X(\boldsymbol{x})=|\Sigma|^
{-\frac{1}{2}}p_{X_0}\{ \Sigma^{-\frac{1}{2}}
(\boldsymbol{x}-\mu) \}$ be a location-scale probability density function,
where $\mu \in \mathbb{R}^n$ is the location vector and $\Sigma \in  \mathbb{R}^ {n\times n}$ is the dispersion matrix. Let $X_0 =
\Sigma^{-\frac{1}{2}}
(X-\mu )$ be a standardized version of $X$, with standardized probability density function
$p_{X_0} (\boldsymbol{x}_0)$ that does not depend on $(\mu,\Sigma )$. Then, we have
\begin{equation}
    \mathbb{E}_{p(X)}[f(\Sigma^{-\frac{1}{2}}
(X-\mu ))]=\mathbb{E}_{p(X_0)}[f(X_0)]
\end{equation}
where $f$  is any arbitrary continuous function.
\end{lemma}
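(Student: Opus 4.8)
The plan is to prove this by a direct change of variables in the integral defining the expectation. First I would write the left-hand side explicitly as
\[
\mathbb{E}_{p(X)}\!\left[f\!\left(\Sigma^{-\frac{1}{2}}(X-\mu)\right)\right] = \int_{\mathbb{R}^n} f\!\left(\Sigma^{-\frac{1}{2}}(\boldsymbol{x}-\mu)\right) p_X(\boldsymbol{x})\, d\boldsymbol{x},
\]
and then substitute the assumed location-scale form $p_X(\boldsymbol{x}) = |\Sigma|^{-\frac{1}{2}} p_{X_0}\!\left(\Sigma^{-\frac{1}{2}}(\boldsymbol{x}-\mu)\right)$ into the integrand, so that both the argument of $f$ and the argument of $p_{X_0}$ become the same quantity.

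Next I would apply the substitution $\boldsymbol{x}_0 = \Sigma^{-\frac{1}{2}}(\boldsymbol{x}-\mu)$, which inverts to $\boldsymbol{x} = \Sigma^{\frac{1}{2}}\boldsymbol{x}_0 + \mu$. Because $\Sigma$ is a symmetric positive definite dispersion matrix, its symmetric square root $\Sigma^{\frac{1}{2}}$ is well defined, invertible, and satisfies $\det \Sigma^{\frac{1}{2}} = |\Sigma|^{\frac{1}{2}}$; the map $\boldsymbol{x}_0 \mapsto \boldsymbol{x}$ is therefore a smooth bijection of $\mathbb{R}^n$ onto itself, with Jacobian giving $d\boldsymbol{x} = |\Sigma|^{\frac{1}{2}}\, d\boldsymbol{x}_0$.

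Carrying out the substitution, the argument of $f$ reduces to $\boldsymbol{x}_0$, the density becomes $|\Sigma|^{-\frac{1}{2}} p_{X_0}(\boldsymbol{x}_0)$, and the Jacobian contributes the factor $|\Sigma|^{\frac{1}{2}}$. The two powers of $|\Sigma|$ cancel, leaving
\[
\int_{\mathbb{R}^n} f(\boldsymbol{x}_0)\, p_{X_0}(\boldsymbol{x}_0)\, d\boldsymbol{x}_0 = \mathbb{E}_{p(X_0)}[f(X_0)],
\]
which is exactly the claimed identity. The only point requiring genuine care is the Jacobian step: one must verify that $\Sigma^{\frac{1}{2}}$ is an invertible linear map with $|\det \Sigma^{\frac{1}{2}}| = |\Sigma|^{\frac{1}{2}}$, so that the change-of-variables formula applies and the determinant factors cancel precisely against the normalizing constant in the location-scale density. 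Everything else is mechanical rewriting, and the continuity of $f$ together with the assumption that $p_{X_0}$ does not depend on $(\mu,\Sigma)$ ensures the final integral is genuinely the expectation under the standardized law.
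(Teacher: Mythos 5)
Your proof is correct and follows essentially the same route as the paper's: a direct change of variables $\boldsymbol{x}_0 = \Sigma^{-\frac{1}{2}}(\boldsymbol{x}-\mu)$ in the expectation integral, after which the standardized density appears and the expectation under $p_{X_0}$ is recognized. In fact you spell out the one step the paper leaves implicit — that the Jacobian factor $|\Sigma|^{\frac{1}{2}}$ cancels the normalizing constant $|\Sigma|^{-\frac{1}{2}}$ in the location-scale density — so your write-up is a more careful version of the same argument.
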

\begin{proof}
\begin{equation*}
\begin{aligned}
\mathbb{E} _{p(X)}[f(\Sigma ^{-\frac{1}{2}}(X-\mu ))]&=\int{p(X)}\left[ f(X_0) \right] dX\\&=\int{p(X_0)}\left[ f(X_0) \right] dX_0
\\
&=\mathbb{E} _{p(X_0)}[f(X_0)]
\end{aligned}
\end{equation*}
    
\end{proof}

\begin{lemma}

let $X_0$ be an N-dimensional standard student-t
random vector, i.e. $p(X_0)=\mathcal{ST}(\nu,0,I)$, then we have,
\begin{equation}
  \mathbb{E} _{p\left( X_0 \right)}\left[ \log \left( 1+\frac{{X_0}^{\top}X_0}{\nu-2} \right) \right] =\Psi \left( \frac{\nu +N}{2} \right) -\Psi \left( \frac{\nu}{2} \right) 
\end{equation}
where $\Psi(\cdot) $  is the digamma function, i.e. $\Psi \left( x \right) =\frac{d\log \varGamma \left( x \right)}{dx}$ for $x \in \mathbb{R}$. 
\end{lemma}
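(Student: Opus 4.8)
The plan is to reduce the claim to differentiating a normalizing constant with respect to its exponent, so that the digamma functions appear naturally from the identity $\Psi=(\log\Gamma)'$. Writing $g(x)=1+\frac{x^\top x}{\nu-2}$, the density $p(X_0)=\mathcal{ST}(\nu,0,I)$ has the explicit form $p(x)=C\,g(x)^{-(\nu+N)/2}$ with $C=\Gamma(\tfrac{\nu+N}{2})/[\Gamma(\tfrac{\nu}{2})((\nu-2)\pi)^{N/2}]$, so the quantity to evaluate is $\mathbb{E}_{p(X_0)}[\log g(X_0)]$. First I would embed $p$ into the one-parameter family $p_a(x)\propto g(x)^{-a}$ for $a>N/2$, with normalizer
\begin{equation*}
Z(a)=\int_{\mathbb{R}^N} g(x)^{-a}\,dx,
\end{equation*}
so that the target density corresponds to the value $a=\tfrac{\nu+N}{2}$.

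The first computational step is to obtain $Z(a)$ in closed form. Passing to polar coordinates and substituting $u=\frac{\|x\|^2}{\nu-2}$ turns the radial integral into a Beta integral $\int_0^\infty u^{N/2-1}(1+u)^{-a}\,du=B(\tfrac{N}{2},a-\tfrac{N}{2})$, which gives
\begin{equation*}
Z(a)=((\nu-2)\pi)^{N/2}\,\frac{\Gamma\!\left(a-\tfrac{N}{2}\right)}{\Gamma(a)}.
\end{equation*}
As a consistency check, $a=\tfrac{\nu+N}{2}$ yields $Z=1/C$, recovering the normalization of $p$. Taking logarithms and differentiating in $a$ removes the $a$-independent prefactor and leaves $\tfrac{d}{da}\log Z(a)=\Psi\!\left(a-\tfrac{N}{2}\right)-\Psi(a)$.

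The second step links this derivative to the desired expectation. Differentiating under the integral sign gives $Z'(a)=-\int \log g(x)\,g(x)^{-a}\,dx$, hence $\tfrac{d}{da}\log Z(a)=Z'(a)/Z(a)=-\mathbb{E}_{p_a}[\log g(X_0)]$. Evaluating at $a=\tfrac{\nu+N}{2}$ and combining with the closed form yields
\begin{equation*}
\mathbb{E}_{p(X_0)}[\log g(X_0)]=-\left(\Psi\!\left(\tfrac{\nu}{2}\right)-\Psi\!\left(\tfrac{\nu+N}{2}\right)\right)=\Psi\!\left(\tfrac{\nu+N}{2}\right)-\Psi\!\left(\tfrac{\nu}{2}\right),
\end{equation*}
which is the assertion.

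The step I expect to require the most care is justifying the interchange of $\tfrac{d}{da}$ and $\int$: I would restrict $a$ to a compact neighborhood of $\tfrac{\nu+N}{2}$ inside $(N/2,\infty)$ and dominate $|\log g(x)|\,g(x)^{-a}$ by an integrable envelope, using that $g(x)^{-a}$ decays polynomially fast enough once $a>N/2$, so that dominated convergence legitimizes the differentiation. As an alternative route that sidesteps this analytic point, one can use the Gamma--Gaussian scale-mixture representation established earlier to write $1+\frac{X_0^\top X_0}{\nu-2}=\frac{U+V}{V}$ with independent $U\sim\chi^2_N$ and $V\sim\chi^2_\nu$, so that $U+V\sim\chi^2_{N+\nu}$; the claim then reduces by linearity of expectation to the standard identity $\mathbb{E}[\log\chi^2_k]=\log 2+\Psi(k/2)$ applied at $k=N+\nu$ and $k=\nu$, whose difference again gives $\Psi(\tfrac{\nu+N}{2})-\Psi(\tfrac{\nu}{2})$.
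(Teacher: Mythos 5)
Your proof is correct, and it is genuinely different from what the paper provides: the paper's ``proof'' of this lemma is a single sentence deferring to the references (Kotz--Nadarajah and Villa--Rubio), so your derivation supplies content the paper omits. Both of your routes check out. In the first, the normalizer $Z(a)=((\nu-2)\pi)^{N/2}\,\Gamma\!\left(a-\tfrac{N}{2}\right)/\Gamma(a)$ is right (polar coordinates plus the Beta integral, valid for $a>N/2$), the consistency check at $a=\tfrac{\nu+N}{2}$ does recover $1/C$, and the sign bookkeeping in $\tfrac{d}{da}\log Z(a)=-\mathbb{E}_{p_a}[\log g]$ lands exactly on $\Psi\!\left(\tfrac{\nu+N}{2}\right)-\Psi\!\left(\tfrac{\nu}{2}\right)$; the dominated-convergence point you flag is the only analytic subtlety and is handled as you say, since on a compact $[a_0,a_1]\subset(\tfrac{N}{2},\infty)$ the integrand is dominated by $|\log g|\,g^{-a_0}$, which is integrable because $2a_0>N$. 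Your second route is arguably the one the paper should have printed, because it closes the loop with the paper's own machinery: its scale-mixture lemma with $\rho=1$, $K=I$, $\phi=0$ gives $X_0=\sqrt{r(\nu-2)}\,W$ with $V:=r^{-1}\sim\chi^2_\nu$ and $U:=W^\top W\sim\chi^2_N$ independent, hence $1+\tfrac{X_0^\top X_0}{\nu-2}=\tfrac{U+V}{V}$ with $U+V\sim\chi^2_{N+\nu}$ (Gamma additivity, so non-integer $\nu$ is not a problem), and the identity $\mathbb{E}[\log\chi^2_k]=\log 2+\Psi(k/2)$ finishes the computation with the $\log 2$ terms cancelling. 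That version avoids the differentiation-under-the-integral issue entirely and makes the lemma self-contained within the paper, whereas your first argument is more general in that it never invokes the mixture representation and would apply verbatim to any density of the form $g^{-a}$.
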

\begin{proof}
    This is a direct conclusion from \cite{kotz2004multivariate,villa2018objective}.
\end{proof}

\begin{theorem}
 An upper bound for $\mathcal{L}_2$, denoted as $\mathcal{L}_2^\star$, can be expressed as follows: 
 \begin{small}
     \begin{equation*}
 \label{l2}
     \mathcal{L}_2^\star=
\log \left\{ 1+\frac{1}{\nu-2}\mathrm{Tr}\left( K_{Z,Z'}^{-1}\mathbf{S} \right) +\frac{1}{\nu -2}\mathrm{Tr}\left( K_{Z,Z'}^{-1}\mathbf{mm}^T \right) \right\} 
\end{equation*}
\end{small}
\end{theorem}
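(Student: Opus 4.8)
The plan is to start from the Jensen bound already displayed above, namely $\mathcal{L}_2 \le \log \mathbb{E}_{q(\mathbf{u})}\left[1 + \frac{1}{\nu-2}\mathbf{u}^\top K_{Z,Z'}^{-1}\mathbf{u}\right]$, and to evaluate the inner expectation in closed form. By linearity of expectation the argument of the logarithm equals $1 + \frac{1}{\nu-2}\mathbb{E}_{q(\mathbf{u})}[\mathbf{u}^\top K_{Z,Z'}^{-1}\mathbf{u}]$, so the entire problem reduces to computing the single scalar $\mathbb{E}_{q(\mathbf{u})}[\mathbf{u}^\top K_{Z,Z'}^{-1}\mathbf{u}]$, where $q(\mathbf{u}) = \mathcal{ST}(\tilde{\nu}, \mathbf{m}, \mathbf{S})$.

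For this I would rewrite the quadratic form using the trace identity $\mathbf{u}^\top K_{Z,Z'}^{-1}\mathbf{u} = \mathrm{Tr}(K_{Z,Z'}^{-1}\mathbf{u}\mathbf{u}^\top)$, and then interchange the expectation with the (linear) trace operator to obtain $\mathbb{E}_{q(\mathbf{u})}[\mathbf{u}^\top K_{Z,Z'}^{-1}\mathbf{u}] = \mathrm{Tr}\left(K_{Z,Z'}^{-1}\,\mathbb{E}_{q(\mathbf{u})}[\mathbf{u}\mathbf{u}^\top]\right)$. It then remains only to identify the second-moment matrix $\mathbb{E}_{q(\mathbf{u})}[\mathbf{u}\mathbf{u}^\top]$ of the variational distribution.

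The crux of the argument, and the one place where the specific $(\nu-2)$-normalization in our definition of the Student-t density matters, is to show that $\mathbb{E}_{q(\mathbf{u})}[\mathbf{u}\mathbf{u}^\top] = \mathbf{S} + \mathbf{m}\mathbf{m}^\top$, i.e. that the covariance of $q(\mathbf{u})$ is exactly $\mathbf{S}$ rather than $\tfrac{\tilde{\nu}}{\tilde{\nu}-2}\mathbf{S}$ as under the textbook scaling. I would establish this through the Gaussian scale-mixture representation of Lemma \ref{lemmas} applied to $q(\mathbf{u})$ with degrees of freedom $\tilde{\nu}$: writing $\mathbf{u}\mid r \sim \mathcal{N}(\mathbf{m}, r(\tilde{\nu}-2)\mathbf{S})$ with $r^{-1}\sim\Gamma(\tilde{\nu}/2, 1/2)$ and applying the tower rule yields $\mathbb{E}[\mathbf{u}\mathbf{u}^\top] = (\tilde{\nu}-2)\,\mathbb{E}[r]\,\mathbf{S} + \mathbf{m}\mathbf{m}^\top$. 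Since $r$ is inverse-Gamma with $\mathbb{E}[r] = 1/(\tilde{\nu}-2)$, the prefactor $(\tilde{\nu}-2)\,\mathbb{E}[r]$ collapses to $1$, giving the claimed moment.

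Substituting back, the argument of the logarithm becomes $1 + \frac{1}{\nu-2}\left[\mathrm{Tr}(K_{Z,Z'}^{-1}\mathbf{S}) + \mathrm{Tr}(K_{Z,Z'}^{-1}\mathbf{m}\mathbf{m}^\top)\right]$, which is precisely $\mathcal{L}_2^\star$, so the bound $\mathcal{L}_2 \le \mathcal{L}_2^\star$ follows. The only genuine obstacle is the covariance computation of the previous paragraph; everything else is routine bookkeeping with traces and linearity. I would also flag the standing assumption $\tilde{\nu} > 2$, which is exactly what guarantees the finiteness of the variance and the existence of $\mathbb{E}[r]$, and which is consistent with the degrees-of-freedom constraint maintained throughout the paper.
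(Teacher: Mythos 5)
Your proof is correct and follows the same skeleton as the paper's: apply Jensen's inequality, then reduce the expected quadratic form to traces against the second moment of $q(\mathbf{u})$. The only real differences are in bookkeeping and in one point of rigor. The paper centers $\mathbf{u}$ as $(\mathbf{u}-\mathbf{m})+\mathbf{m}$, expands, drops the cross terms in expectation, and then silently substitutes $\mathbb{E}_{q}\left[(\mathbf{u}-\mathbf{m})(\mathbf{u}-\mathbf{m})^{\top}\right]=\mathbf{S}$; you instead compute $\mathbb{E}_{q}\left[\mathbf{u}\mathbf{u}^{\top}\right]=\mathbf{S}+\mathbf{m}\mathbf{m}^{\top}$ directly and, importantly, justify why the covariance of $q(\mathbf{u})$ is exactly $\mathbf{S}$ (rather than $\tfrac{\tilde{\nu}}{\tilde{\nu}-2}\mathbf{S}$, as it would be under the textbook scaling) by invoking the paper's Gaussian scale-mixture lemma with $\rho=1$ and the inverse-gamma mean $\mathbb{E}[r]=1/(\tilde{\nu}-2)$, so that $(\tilde{\nu}-2)\mathbb{E}[r]=1$. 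That covariance identity is precisely the step the paper's proof takes for granted, so your write-up is, if anything, more complete; the two derivations are otherwise the same computation rearranged, and your flagging of the standing assumption $\tilde{\nu}>2$ is consistent with the degrees-of-freedom constraint the paper maintains throughout.
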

\begin{proof}
\begin{tiny}
    
\begin{equation*}
\begin{aligned}
	&\mathcal{L}_2=\mathbb{E} _{q(\mathbf{u})}\left\{ \log \left[ 1+\frac{1}{\nu -2}\mathbf{u}^{\mathrm{T}}K_{Z,Z'}^{-1}\mathbf{u} \right] \right\}\\
	\le &\log \left\{ \mathbb{E} _{q(\mathbf{u})}\left[ 1+\frac{1}{\nu -2}\mathbf{u}^{\mathrm{T}}K_{Z,Z'}^{-1}\mathbf{u} \right] \right\}\\
	=&\log \left\{ \mathbb{E} _{q(\mathbf{u})}\left[ \begin{array}{c}
	1+\frac{1}{\nu -2}\left( \mathbf{u}-\mathbf{m}+\mathbf{m} \right) ^{\mathrm{T}}K_{Z,Z'}^{-1}\left( \mathbf{u}-\mathbf{m}+\mathbf{m} \right)\\
\end{array} \right] \right\}\\
	=&\log \left\{ \mathbb{E} _{q(\mathbf{u})}\left[ 1+\frac{1}{\nu -2}\mathbf{u}^{\mathrm{T}}K_{Z,Z'}^{-1}\mathbf{u}+\frac{1}{\nu -2}\mathbf{m}^{\mathrm{T}}K_{Z,Z'}^{-1}\mathbf{m} \right. \right.\\
	&\left. \left. +\frac{1}{\nu -2}\left( \mathbf{u}-\mathbf{m} \right) ^{\mathrm{T}}K_{Z,Z'}^{-1}\mathbf{m}+\frac{1}{\nu -2}\mathbf{m}^{\mathrm{T}}K_{Z,Z'}^{-1}\left( \mathbf{u}-\mathbf{m} \right) \right] \right\}\\
	=&\log \left\{ \mathbb{E} _{q(\mathbf{u})}\left\{ 1+\frac{1}{\nu -2}\mathrm{tr}\left[ K_{Z,Z'}^{-1}\left( \mathbf{u}-\mathbf{m} \right) \left( \mathbf{u}-\mathbf{m} \right) ^{\mathrm{T}} \right] +\frac{1}{\nu -2}\mathrm{tr}\left[ K_{Z,Z'}^{-1}\mathbf{mm}^{\mathrm{T}} \right] \right\} \right\}\\
	=&\log \left\{ 1+\frac{1}{\nu -2}\mathrm{tr}\left( K_{Z,Z'}^{-1}\mathbf{S} \right) +\frac{1}{\nu -2}\mathrm{tr}\left[ K_{Z,Z'}^{-1}\mathbf{mm}^{\mathrm{T}} \right] \right\}=\mathcal{L}_2^*\\
\end{aligned}
\end{equation*}
\end{tiny}
\end{proof}
\begin{lemma}
     Suppose $ f \sim \mathcal{T} \mathcal{P}(\nu, \phi, K)$  and $g \sim \mathcal{G} \mathcal{P}(\phi, K) \text {. Then } f \text { tends to } g \text { in distribution as } \nu \rightarrow \infty$ . 
\end{lemma}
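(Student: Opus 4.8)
The plan is to reduce the claim about processes to a claim about their finite-dimensional marginals, and then to establish the marginal convergence through the Gaussian scale-mixture representation already provided by Lemma~\ref{lemmas}. Both $\mathcal{T}\mathcal{P}(\nu,\phi,K)$ and $\mathcal{G}\mathcal{P}(\phi,K)$ are defined, via Kolmogorov's consistency theorem, by their finite-dimensional distributions, so convergence in distribution of the processes is by definition convergence in distribution of every finite collection $(f(x_1),\dots,f(x_n))$. It therefore suffices to fix an arbitrary finite index set, set $K_{ij}=k(x_i,x_j)$ and $\phi_i=\Psi(x_i)$, and prove that $\mathcal{ST}_n(\nu,\phi,K)$ converges to $\mathcal{N}_n(\phi,K)$ in distribution as $\nu\to\infty$.

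For this marginal statement I would reuse machinery already in the paper. By Lemma~\ref{lemmas} with $\rho=\nu$, a draw $y_\nu\sim\mathcal{ST}_n(\nu,\phi,K)$ can be written as $y_\nu=\phi+\sqrt{s_\nu}\,L\boldsymbol{\epsilon}$, where $L$ is a fixed Cholesky factor with $LL^{\top}=K$, the vector $\boldsymbol{\epsilon}\sim\mathcal{N}(0,I)$ does not depend on $\nu$, and the scalar $s_\nu=r(\nu-2)/\nu$ is built from $r^{-1}\sim\Gamma(\nu/2,\nu/2)$ independent of $\boldsymbol{\epsilon}$. The key step is to show $s_\nu\to 1$ in probability: the variable $r^{-1}$ has mean $1$ and variance $2/\nu$, so by Chebyshev's inequality $r^{-1}\to 1$ in probability, hence $r\to 1$ by the continuous mapping theorem, and multiplying by $(\nu-2)/\nu\to 1$ gives $s_\nu\to 1$. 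An application of Slutsky's theorem to the product $\sqrt{s_\nu}\,(L\boldsymbol{\epsilon})$ then yields $y_\nu\to\phi+L\boldsymbol{\epsilon}\sim\mathcal{N}(\phi,K)$ in distribution, which is the required marginal limit.

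As an independent check I would also carry out the direct density argument. The $\mathcal{ST}_n(\nu,\phi,K)$ density is the product of the kernel $(1+\beta/(\nu-2))^{-(\nu+n)/2}$, with $\beta=(y-\phi)^{\top}K^{-1}(y-\phi)$, and a gamma-function normalizing constant. The kernel tends pointwise to $e^{-\beta/2}$ by the elementary limit $(1+x/m)^m\to e^x$, while the constant tends to $(2\pi)^{-n/2}|K|^{-1/2}$ once the ratio $\Gamma((\nu+n)/2)/[\Gamma(\nu/2)(\nu-2)^{n/2}]$ is shown to converge to $2^{-n/2}$ by Stirling's asymptotics. Pointwise convergence of densities together with Scheff\'e's lemma upgrades this to convergence in distribution, confirming the same Gaussian limit.

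The main obstacle is bookkeeping rather than conceptual. In the density route one must treat the gamma-function ratio with care, since each factor individually diverges and only their combination has a finite nonzero limit. In the scale-mixture route the only delicate point is transferring the in-probability convergence of the conditioning scale $s_\nu$ to convergence in distribution of the mixture, which is exactly what the Slutsky/continuous-mapping step delivers once one notes that $\boldsymbol{\epsilon}$ is fixed and independent of $s_\nu$. I would present the scale-mixture proof as the primary argument, since it is short and invokes Lemma~\ref{lemmas} verbatim, and mention the density computation only as a remark.
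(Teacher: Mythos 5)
Your proposal is correct, but your primary argument takes a genuinely different route from the paper's. The paper works directly with densities: it fixes a finite collection of inputs, sets $\beta=(\mathbf{y}-\phi)^{\top}K^{-1}(\mathbf{y}-\phi)$, and observes that the Student-t kernel $\left(1+\beta/(\nu-2)\right)^{-(\nu+n)/2}$ tends pointwise to $e^{-\beta/2}$, from which it concludes that the finite-dimensional law tends to $\mathrm{N}_n(\phi,K)$. Your primary argument instead recycles the Gamma--Gaussian scale-mixture lemma (Lemma~\ref{lemmas}, with $\rho=\nu$) to realize each marginal as $y_\nu=\phi+\sqrt{s_\nu}\,L\boldsymbol{\epsilon}$ with $s_\nu=r(\nu-2)/\nu$, shows $s_\nu\to 1$ in probability by Chebyshev and the continuous mapping theorem, and closes with Slutsky; this is a coupling argument that never touches densities or normalizing constants. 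Both reductions to finite-dimensional marginals are the same, but the two proofs of the marginal limit differ, and each buys something: the paper's density computation is shorter, while your scale-mixture route is self-contained and makes the mechanism of the limit transparent (the mixing scale concentrates at $1$). Notably, your secondary ``check'' is essentially the paper's proof carried out more carefully than the paper itself does it: as written, the paper only establishes pointwise convergence of the \emph{unnormalized} kernel, which does not by itself yield weak convergence; one also needs the gamma-function normalizing ratio $\Gamma((\nu+n)/2)/[\Gamma(\nu/2)((\nu-2)\pi)^{n/2}]$ to converge to the Gaussian constant (Stirling) and then Scheff\'e's lemma to upgrade density convergence to convergence in distribution --- exactly the two points your remark flags. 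Either of your two arguments, as described, would be a sound and in fact more complete replacement for the paper's proof.
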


\begin{proof}
     It is sufficient to show convergence in density for any finite collection of inputs. Let $\boldsymbol{y} \sim \mathcal{ST}_n(\nu, \boldsymbol{\phi}, K)$ and set $\beta=(\boldsymbol{y}-\boldsymbol{\phi})^{\top} K^{-1}(\boldsymbol{y}-\boldsymbol{\phi})$ then
$$
p(\boldsymbol{y}) \propto\left(1+\frac{\beta}{\nu-2}\right)^{-(\nu+n) / 2} \rightarrow e^{-\beta / 2}
$$
as $\nu \rightarrow \infty$. Hence the distribution of $\boldsymbol{y}$ tends to a $\mathrm{N}_n(\phi, K)$ distribution as $\nu \rightarrow \infty$.
\end{proof}
\begin{theorem}
  as $\nu\rightarrow\infty$, the posterior distribution of SVTP converges in distribution to the posterior distribution of SVGP.
\end{theorem}
\begin{proof}
 In SVTP, the joint probability density of $\mathbf{u}$ and $\mathbf{f}$ can be written as,
 $$
p(\mathbf{u},\mathbf{f})=\mathcal{S} \mathcal{T} \left( \nu ,\left[ \begin{array}{c}
	0\\
	0\\
\end{array} \right] ,\left[ \begin{matrix}
	K_{Z,Z'}&		K_{Z,X}\\
	K_{X,Z}&		K_{X,X'}\\
\end{matrix} \right] \right) 
$$
as $\nu\rightarrow\infty$,$$
\mathbf{f},\mathbf{u}\xrightarrow{\mathcal{P}}\mathbf{f}',\mathbf{u}'\sim \mathcal{N} \left( \left[ \begin{array}{c}
	0\\
	0\\
\end{array} \right] ,\left[ \begin{matrix}
	K_{Z,Z'}&		K_{Z,X}\\
	K_{X,Z}&		K_{X,X'}\\
\end{matrix} \right] \right) 
$$

 By Bayes Rule, the the posterior distribution of SVTP can be written as,
 \begin{equation*}
     p\left( \mathbf{u},\mathbf{f}|\mathbf{y} \right) =\frac{p\left( \mathbf{f},\mathbf{u} \right) p\left( \mathbf{y}|\mathbf{f} \right)}{p\left( \mathbf{y} \right)}
\end{equation*}
As $\nu \rightarrow \infty$, due to the continuity of random variables,  we have $$
\mathbf{u},\mathbf{f}|\mathbf{y}  \xrightarrow{\mathcal{P}} \mathbf{u}',\mathbf{f}'|\mathbf{y}  
$$

\end{proof}
\begin{proposition}
Specifically, we have discovered an interesting fact that when we set $q(\mathbf{u})=\mathcal{ST}(
\nu +n, \mathbf{m},\mathbf{S})$ , we can analytically marginalize $\mathbf{u}$, resulting in :
\begin{equation}
    q\left( \mathbf{f} \right) =\int{p\left( \mathbf{f}|\mathbf{u} \right) q\left( \mathbf{u} \right)}d\mathbf{u}=\mathcal{ST}\left( \nu ,\mu ',\Sigma ' \right) 
\end{equation}
where $\mu'=K_{X,Z}K_{Z,Z'}^{-1}\mathbf{m}$ and $\Sigma'=K_{X,X'}-K_{X,Z}K_{Z,Z'}^{-1}(K_{Z,Z'}-\mathbf{S})K_{Z,Z'}^{-1}K_{Z,X}$.
\end{proposition}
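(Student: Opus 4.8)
The plan is to prove this by exploiting the Gaussian scale-mixture representation of the Student-t distribution (Lemma~\ref{lemmas}) together with the affine/convolution stability of Gaussians (Lemma~\ref{lemma2}), so that the Student-t marginalization is reduced to a Gaussian one carried out at a fixed scale. Throughout let $A = K_{X,Z}K_{Z,Z'}^{-1}$ and $\Sigma = K_{X,X'}-K_{X,Z}K_{Z,Z'}^{-1}K_{Z,X}$ as in Eq.~(\ref{condi}).

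First I would introduce a single auxiliary gamma variable $r$ with $r^{-1}\sim\Gamma(\nu/2,1/2)$ and write both factors conditional on $r$ as Gaussians: $q(\mathbf{u})$ as $\mathbf{u}\mid r\sim\mathcal{N}(\mathbf{m},\,r(\nu-2)\mathbf{S})$ and the conditional $p(\mathbf{f}\mid\mathbf{u})$ as $\mathbf{f}\mid\mathbf{u},r\sim\mathcal{N}(A\mathbf{u},\,r(\nu-2)\Sigma)$. By Lemma~\ref{lemmas} each of these integrates back to the correct Student-t marginal and conditional, so the representation is faithful; the purpose of the degrees-of-freedom choice in $q(\mathbf{u})$ is exactly to let $\mathbf{u}$ and $\mathbf{f}$ share this one gamma variable.

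Second, at a fixed value of $r$ everything is jointly Gaussian, so I would marginalize $\mathbf{u}$ using the affine-image and convolution rules (Lemma~\ref{lemma2} plus additivity of independent Gaussians): writing $\mathbf{f}=A\mathbf{u}+\boldsymbol\eta$ with $\boldsymbol\eta\mid r\sim\mathcal{N}(0,\,r(\nu-2)\Sigma)$ independent of $\mathbf{u}$, I obtain
\[
\mathbf{f}\mid r\sim\mathcal{N}\!\left(A\mathbf{m},\,r(\nu-2)\big(\Sigma+A\mathbf{S}A^{\top}\big)\right).
\]
A short linear-algebra simplification then identifies $A\mathbf{m}=\mu'$ and
\[
\Sigma+A\mathbf{S}A^{\top}=K_{X,X'}-K_{X,Z}K_{Z,Z'}^{-1}(K_{Z,Z'}-\mathbf{S})K_{Z,Z'}^{-1}K_{Z,X}=\Sigma',
\]
matching the stated mean and dispersion. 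Mixing over $r$ a final time via Lemma~\ref{lemmas}, since $\mathbf{f}\mid r\sim\mathcal{N}(\mu',\,r(\nu-2)\Sigma')$ with $r^{-1}\sim\Gamma(\nu/2,1/2)$, yields the marginal $q(\mathbf{f})=\mathcal{ST}(\nu,\mu',\Sigma')$, which is the claim.

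I expect the main obstacle to be the bookkeeping that lets a single scale variable $r$ serve simultaneously for $q(\mathbf{u})$ and for the conditional $p(\mathbf{f}\mid\mathbf{u})$ of Eq.~(\ref{condi}). The latter carries the data-dependent scaling $\tfrac{\nu+\beta-2}{\nu+M-2}$ with $\beta=\mathbf{u}^{\top}K_{Z,Z'}^{-1}\mathbf{u}$ and degrees of freedom $\nu+M$, whereas the flat Gaussian representation above uses scale $r(\nu-2)\Sigma$. Reconciling these requires showing that the posterior of $r$ given $\mathbf{u}$ absorbs precisely the factor $\tfrac{\nu+\beta-2}{\nu+M-2}$, and that the degrees of freedom $\nu+n$ assigned to $q(\mathbf{u})$ is exactly what renders this posterior consistent across the two factors so that the shared-scale construction reproduces both the intended $q(\mathbf{u})$ and the conditional of Eq.~(\ref{condi}); this alignment is the delicate step. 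Once it is in place, the two applications of Lemma~\ref{lemmas} and the single Gaussian marginalization (Lemma~\ref{lemma2}) deliver the result, with the only remaining work being the routine simplification of $\Sigma+A\mathbf{S}A^{\top}$ into the stated $\Sigma'$.
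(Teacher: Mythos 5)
Your scale-mixture route is genuinely different from the paper's proof (which multiplies the two Student-t kernels directly and recognizes the product as a joint Student-t via the block decomposition of the Mahalanobis form), and its Gaussian core is correct: at fixed $r$ one gets $\mathbf{f}\mid r\sim\mathcal{N}\left(A\mathbf{m},\,r(\nu-2)(\Sigma+A\mathbf{S}A^{\top})\right)$, and the identities $A\mathbf{m}=\mu'$ and $\Sigma+A\mathbf{S}A^{\top}=\Sigma'$ are exactly the identification the paper makes. The genuine gap is the step you flag as ``delicate'' and then defer: it cannot be carried out as you describe. With a single shared $r$, integrating out $r$ forces $\mathbf{u}\sim\mathcal{ST}(\nu,\mathbf{m},\mathbf{S})$ --- degrees of freedom $\nu$, not the $\nu+n$ in the statement --- since $(\mathbf{u},\mathbf{f})$ is then jointly Student-t with degrees of freedom $\nu$ and every marginal of such a joint inherits the same degrees of freedom; no bookkeeping can give the $\mathbf{u}$-marginal $\nu+n$ while the $\mathbf{f}$-marginal keeps $\nu$. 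Likewise, the conditional your construction induces is $\mathcal{ST}\left(\nu+M,\,A\mathbf{u},\,\frac{\nu+\beta_1-2}{\nu+M-2}\Sigma\right)$ with $\beta_1=(\mathbf{u}-\mathbf{m})^{\top}\mathbf{S}^{-1}(\mathbf{u}-\mathbf{m})$, because the posterior of the scale is $r^{-1}\mid\mathbf{u}\sim\Gamma\left(\frac{\nu+M}{2},\,\frac{\nu+\beta_1-2}{2(\nu-2)}\right)$; it absorbs $\frac{\nu+\beta_1-2}{\nu+M-2}$, not the factor $\frac{\nu+\beta-2}{\nu+M-2}$ with $\beta=\mathbf{u}^{\top}K_{Z,Z'}^{-1}\mathbf{u}$ appearing in Eq.~(\ref{condi}). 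The two agree only when $\mathbf{m}=\mathbf{0}$ and $\mathbf{S}=K_{Z,Z'}$, so the ``alignment'' you hope for is not delicate but impossible, and your claim that the representation is faithful to the stated $q(\mathbf{u})$ and to Eq.~(\ref{condi}) is false.

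What rescues the approach is that the paper's own proof does not use the literal hypotheses either: in its displayed product the conditional kernel is $\left(1+\frac{\beta_2}{\beta_1+\nu-2}\right)^{-(\nu+n+M)/2}$ (with $\beta_1$, not $\beta$), and the factor attributed to $q(\mathbf{u})$ has its quadratic form divided by $\nu-2$, which under Definition 1 is not the density of $\mathcal{ST}(\nu+n,\mathbf{m},\mathbf{S})$. In substance the paper proves the coherent ``joint'' version of the claim: take $(\mathbf{u},\mathbf{f})$ jointly Student-t with degrees of freedom $\nu$, $\mathbf{u}$-block $(\mathbf{m},\mathbf{S})$ and cross-covariance $K_{X,Z}K_{Z,Z'}^{-1}\mathbf{S}$; then the conditional is the $\beta_1$-version and the $\mathbf{f}$-marginal is $\mathcal{ST}(\nu,\mu',\Sigma')$. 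That joint is precisely what your shared-$r$ construction builds, so if you restate the target accordingly --- $q(\mathbf{u})=\mathcal{ST}(\nu,\mathbf{m},\mathbf{S})$ and $p(\mathbf{f}\mid\mathbf{u})$ read as the conditional of this joint --- your argument closes and is a clean alternative to the paper's kernel computation. As submitted, however, it leaves the decisive step unproved, and unprovable in the form you state it.
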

\begin{proof}
Let $
\beta _1=\left( \mathbf{u}-\mathbf{m} \right) ^{\top}\mathbf{S}^{-1}\left( \mathbf{u}-\mathbf{m} \right) 
$, and
$\beta _2=\left( \mathbf{f}-\mu  \right) ^{\top}\Sigma ^{-1}\left( \mathbf{f}-\mu  \right) $, where $\mu=K_{X,Z}K_{Z,Z'}^{-1}\mathbf{u}$ and $\Sigma=K_{X,X'}-K_{X,Z}K_{Z,Z'}^{-1}K_{Z,X}$. Then we have,
\begin{equation*}
\begin{aligned}
    & p\left( \mathbf{f}|\mathbf{u} \right) q\left( \mathbf{u} \right) \\&\propto \left( 1+\frac{\beta _2}{\beta _1+\nu -2} \right) ^{-(\nu +n+M)/2}\left( 1+\frac{\beta _1}{\nu -2} \right) ^{-\left( \nu +n+M \right) /2}
    \\& \propto \left( 1+\frac{\beta _1+\beta _2}{\nu -2} \right) ^{-\left( \nu +n+M \right) /2}
\end{aligned} 
\end{equation*}
The above form is a multivariate Student's t-distribution. By the properties of the Mahalanobis distance, we have,
\begin{equation*}
\begin{aligned}
    \beta _1+\beta _2=\left[ \begin{matrix}
	\mathbf{u}-\mathbf{m}&		\mathbf{f}-\mu '\\
\end{matrix} \right] \left[ \begin{matrix}
	\mathbf{S}&		\Sigma _{12}\\
	\Sigma _{21}&		\Sigma '\\
\end{matrix} \right] ^{-1}\left[ \begin{array}{c}
	\mathbf{u}-\mathbf{m}\\
	\mathbf{f}-\mu '\\
\end{array} \right] 
\\
\end{aligned}
\end{equation*}
where $\mu'=K_{X,Z}K_{Z,Z'}^{-1}\mathbf{m}$ and $\Sigma'=K_{X,X'}-K_{X,Z}K_{Z,Z'}^{-1}(K_{Z,Z'}-\mathbf{S})K_{Z,Z'}^{-1}K_{Z,X}$, $\Sigma_{21}=\Sigma_{12}^T=K_{X,Z}K_{Z,Z'}^{-1}\mathbf{S}$ is the covariance of $\mathbf{u}$ and $\mathbf{f}$. From this, we obtain the marginal distribution.
\begin{equation*}
    q\left( \mathbf{f} \right)=\mathcal{ST}\left( \nu ,\mu ',\Sigma ' \right) 
\end{equation*}
\end{proof}

\section{Experiments Datasets}
We obtained eight datasets from UCi and Kaggle to conduct the experiments, here is the information of the datasets.
\begin{itemize}
\item \textbf{Concrete Data}. This data set is about the slump flow of
concrete. It contains 1030 instances and 9 attributes.
\item \textbf{Boston Data}. The Boston housing data was collected in 1978 and each of the 506 entries represent aggregated data
about 14 features for homes from various suburbs in Boston,
Massachusetts.

\item \textbf{Kin8m Data} This is a data set concerned with the forward
kinematics of an 8 link robot arm. Among the existing variants
of this data set it has used the variant 8nm, which is known
to be highly non-linear and medium noisy.

\item \textbf{Yacht Data}Yacht dataset is used to predict the hydodynamic
 performance of sailing yachts from dimensions and
velocity. It comprises 308 instances and 6 features performed
at the Delft Ship Hydromechanics Laboratory. 

\item \textbf{Energy Data} It performs energy analysis using 12 different building shapes simulated in Ecotect. The buildings differ with respect to the glazing area, the glazing area distribution, and
the orientation, amongst other parameters. It simulates various settings as functions of the afore-mentioned characteristics to obtain 768 building shapes. The dataset comprises 768 samples
and 8 features, aiming to predict two real valued responses.
\item \textbf{Elevator Data} The problem has 18 attributes and this data
set is obtained from the task of controlling a F16 aircraft,
although the target variable and attributes are different from
the ailerons domain. In this case the goal variable is related
to an action taken on the elevators of the aircraft.
\item \textbf{Protein Data} This is a data set of Physicochemical Properties
 of Protein Tertiary Structure. The data set is taken from
CASP 5-9. There are 45730 decoys and size varying from 0
to 21 armstrong.
\item \textbf{Taxi Trip Fare Data} This project involves analyzing millions of New York City yellow taxi cab trips using a BigQuery Public Dataset. The objective is to build a machine learning model within BigQuery that can predict the fare of a cab ride based on the model inputs.
\end{itemize}

\bibliography{aaai24}
\end{document}